\def\1{\bm{1}}
\def\vs{{\bm{s}}}
\DeclareMathAlphabet{\mathsfit}{\encodingdefault}{\sfdefault}{m}{sl}
\SetMathAlphabet{\mathsfit}{bold}{\encodingdefault}{\sfdefault}{bx}{n}
\def\gO{{\mathcal{O}}}
\DeclareMathOperator*{\argmin}{arg\,min}
\icmltitlerunning{Cooperative Multi-Agent Exploration}
\begin{document}
\twocolumn[
\icmltitle{Cooperative Exploration for Multi-Agent Deep Reinforcement Learning}

\icmlsetsymbol{equal}{*}

\begin{icmlauthorlist}
\icmlauthor{Iou-Jen Liu}{uiuc}
\icmlauthor{Unnat Jain }{uiuc}
\icmlauthor{Raymond A. Yeh}{uiuc}
\icmlauthor{Alexander G. Schwing}{uiuc}
\end{icmlauthorlist}

\icmlaffiliation{uiuc}{University of Illinois at Urbana-Champaign, IL, U.S.A.}

\icmlcorrespondingauthor{Iou-Jen Liu}{iliu3@illinois.edu}

\icmlkeywords{Machine Learning, ICML}

\vskip 0.3in
]
\printAffiliationsAndNotice{}
\begin{abstract}
Exploration is critical for good results in deep reinforcement learning 
and has attracted much attention. However, existing \emph{multi-agent} deep reinforcement learning algorithms still use mostly noise-based techniques. 
Very recently, exploration methods that consider cooperation among multiple agents have been developed. However, existing methods suffer from a common challenge: agents struggle to identify states that are worth exploring, and hardly coordinate  exploration efforts toward those states. To address this shortcoming, in this paper, we propose cooperative multi-agent exploration (CMAE): agents share a common goal while exploring. The goal is selected from multiple projected state spaces via a normalized entropy-based technique. Then, agents are trained to reach this goal in a coordinated manner. 
 We demonstrate that CMAE consistently outperforms baselines on various tasks, including a sparse-reward version of the  multiple-particle environment (MPE) and the  Starcraft multi-agent challenge (SMAC).   
 \end{abstract}

\section{Introduction}
\label{sec:intro}

Multi-agent reinforcement learning (MARL) is an increasingly important field. Indeed, many real-world problems are naturally modeled using MARL techniques. 
For instance, tasks from areas as diverse as robot fleet coordination~\citep{Swamy20, swarm} and autonomous traffic control~\citep{Bazzan08, Sunehag18} fit MARL formulations.

To address MARL problems, early work followed  independent single-agent reinforcement learning work~\citep{Tampuu15, Tan93, Matignon12}. However, more recently, specifically tailored techniques such as monotonic value function factorization (QMIX)~\citep{qmix}, multi-agent deep deterministic policy gradient (MADDPG)~\citep{maddpg}, and counterfactual multi-agent policy gradients (COMA)~\citep{coma} have been developed. Those methods excel in a multi-agent setting because they address the non-stationary issue of MARL {via a centralized critic.} 
Despite those advances and the resulting reported performance improvements, a common issue remains: all of the aforementioned methods use exploration techniques from classical algorithms. Specifically, these methods employ noise-based exploration, \ie, the exploration policy is a noisy version of the actor policy~\cite{wqmix, maddpg, Foerster16, qmix, Yang18}.

It was recently recognized  that use of classical exploration techniques is sub-optimal in a multi-agent reinforcement learning setting. Specifically,~\citet{maven} show that QMIX with $\epsilon$-greedy exploration results in slow exploration and sub-optimality.~\citet{maven} improve exploration by conditioning an agent's behavior on a shared latent variable controlled by a hierarchical policy. 
Even more recently,~\citet{eiti} encourage coordinated exploration by considering the influence of one agent's behavior on other agents' behaviors.

While all of the aforementioned  exploration techniques  for multi-agent reinforcement learning significantly improve results, 
{they suffer from two common challenges: 
(1) agents struggle to identify states that are worth exploring. .
Identifying under-explored states is particularly challenging when the number of agents increase, since the state and action space grows exponentially with the number of agents. 
(2) Agents don't coordinate their exploration efforts toward under-explored states. }
To give an example, consider a Push-Box task, where two agents need to jointly push a heavy box  to a specific location before observing a reward.  In this situation, instead of exploring  the environment independently, agents need to coordinate pushing the box within the environment to find the specific location. 

To address both challenges, we propose cooperative multi-agent exploration (CMAE). 
{To identify states that are worth exploring, 
we observe that, while the state space grows exponentially, the reward function typically depends on a small subset of the state space. For instance, in the aforementioned Push-Box task, the state space contains the location of agents and the box while the reward function only depends on the location of the box. To solve the task, exploring the box's location is much more efficient than exploring the full state space. To encode this inductive bias into CMAE, we propose a bottom-up exploration scheme. Specifically, we project the high-dimensional state space to low-dimensional spaces, which we refer to as restricted spaces. 
Then, we gradually explore restricted spaces from low- to high-dimensional. 
To ensure the agents coordinate their exploration efforts, we select goals from restricted spaces and train the exploration policies to reach the goal. Specifically, inspired by~\citet{her}, we reshape the rewards in the replay buffer such that a positive reward is given when the goal is reached. }

{To show that CMAE improves results, we evaluate the proposed approach on two multi-agent environment suites: a discrete version of the multiple-particle environment (MPE)~\cite{maddpg, eiti} and the Starcraft multi-agent challenge (SMAC)~\citep{smac}. In both environments, we consider both dense-reward and sparse-reward settings. Sparse-reward settings are particularly challenging because agents need to coordinate their behavior for extended timesteps before receiving any non-zero reward. CMAE consistently outperforms the state-of-the-art baselines in sparse-reward tasks. {For more, please see our project page: \url{https://ioujenliu.github.io/CMAE}}. }

\section{Preliminaries}
We first define  the multi-agent Markov decision process (MDP) in~\secref{subsec:mdp} and introduce the multi-agent reinforcement learning setting in~\secref{subsec:marl}.
\subsection{Multi-Agent Markov Decision Process}
\label{subsec:mdp} 
A cooperative multi-agent system is modeled as a multi-agent Markov decision process (MDP). An $n$-agent MDP is defined by a tuple $({\cal S}, {\cal A}, {\cal T}, {\cal R}, {\cal Z}, {\cal O}, n, \gamma, H)$. $\cal S$ is the state space of the environment.  
 $\cal A$ is the action space of each agent. At each time step $t$, each agent's target policy $\pi_i$,  $i \in \{1,\ldots,n\}$, selects an action $a_i^t \in {\cal A}$. All selected actions form a joint action $\bm{a}^t \in {\cal A}^n$. The transition function ${\cal T}$ maps the current state $s^t$ and the joint action $\bm{a}^t$ to a distribution over the next state $s^{t+1}$, \ie, ${\cal T}: {\cal S} \times {\cal A}^n \rightarrow \Delta({\cal S})$. All agents receive a collective reward $r^t \in \mathbb{R}$ according to the reward function ${\cal R}: {\cal S} \times {\cal A}^n \rightarrow \mathbb{R}$. The
objective
 of all agents' policies is to maximize the collective return $\sum_{t=0}^H \gamma^t r^t$, where $\gamma \in [0, 1]$ is the discount factor, $H$ is the horizon,  and $r^t$ is the collective reward obtained at timestep $t$. Each agent $i$ observes local observation $o^t_i \in {\cal Z}$ according to the observation function ${\cal O}:  {\cal S} \rightarrow {\cal Z}$. 
Note,  observations usually reveal partial information about the state. For instance, suppose the state contains the location of agents, while the local observation of an agent may only contain the location of other agents within a limited distance.  All agents' local observations form a joint observation $\bm o^t$. 
\vspace{-0.25cm}
\subsection{Multi-Agent Reinforcement Learning}
\label{subsec:marl} 
In this paper, we follow the standard centralized training and decentralized execution (CTDE) paradigm~\citep{maddpg, qmix, coma, maven, pic}: at training time, the learning algorithm has access to all agents' local observations, actions, and the state. At execution time, \ie, at test time, each individual agent only has access to its own local observation.   

The proposed CMAE is applicable to off-policy MARL methods \citep[\eg,][]{qmix, maddpg, Sunehag18, Matignon12, pic}. In off-policy MARL, exploration policies $\bm{\mu}=\{\mu_i\}_{i=1}^n$ are responsible for collecting data from the environment. The data  in the form of transition tuples $(s^t, \bm{o}^t, \bm{a}^t, s^{t+1} , \bm{o}^{t+1}, r^t)$ is stored in  a replay memory $\cal D$, \ie, ${\cal D}=\{ (s^t, \bm{o}^t, \bm{a}^t, s^{t+1} , \bm{o}^{t+1}, r^t) \}_t$. The target policies $\bm{\pi}=\{\pi_i\}_{i=1}^n$ are trained using transition tuples from the replay memory.


\vspace{-0.2cm}
\section{Coordinated Multi-Agent Exploration (CMAE)}
\label{sec:app}
In the following we first present an overview of CMAE before we discuss the method more formally.

\SetKwComment{Comment}{$\triangleright$\ }{}

\setlength{\textfloatsep}{5pt}
\begin{algorithm*}[tb]
\DontPrintSemicolon

\SetKwInOut{Input}{input}
 \SetKwInOut{Output}{output}
\SetKwInput{init}{Init}

\init{space tree $ T_{\text{space}}$, counters $\bm{c}$ } 
\init{exploration policies $\bm{\mu}=\{\mu_i\}_{i=1}^n$, target policies $\bm{\pi}=\{\pi_i\}_{i=1}^n$,  replay buffer $\cal D$}

\For{episode $=1\dots E$}{
  Reset the environment. Observe state $s^1$ and observations $\bm{o}^1 =( o_1^1,\dots, o_n^1)$\\
 \For{$t=1 \dots H$}{
  Select $\bm{a}^t$ \text{using a mixture of exploration and target policies} $\alpha \bm{\mu}+ (1-\alpha)\bm{\pi}$. $\alpha$ decreases linearly to 0 \label{alg1:select}\\
  $r^t$, $s^{t+1}$, $\bm{o}^{t+1}$\ = environment.step($\bm{a}^t$) \\
  Add transition tuple $\{s^t, \bm{o}^t, \bm{a}^t, s^{t+1}, \bm{o}^{t+1}, r^t\}$ to $\cal D$\\
  UpdateCounters($\bm{c}$, $s^{t+1}$, $\bm{o}^{t+1}$)\label{alg1:track}\\ 
  TrainTarget($\bm{\pi}$, ${\cal D}$)\\
 }

 \If{episode $\bmod~N = 0$}{
$g$ = SelectRestrictedSpaceGoal($\bm{c}$, $T_{\text{space}}$, $\cal D$, episode)\label{alg1:space}\Comment*[r]{Select shared goal (\algref{alg:select_goal})}
 }
   TrainExp($\bm{\mu}$, $g$, ${\cal D}$)\Comment*[r]{Train exploration policies (\algref{alg:trainexp})}
}
\caption{Training with Coordinated Multi-Agent Exploration (CMAE)}
\label{alg:cmae}
\end{algorithm*}

\setlength{\textfloatsep}{0pt}

\textbf{Overview:} 
The 
goal 
is to train the target policies $\bm{\pi}=\{\pi_i\}_{i=1}^n$ of $n$ agents to maximize the environment episode return. Classical off-policy algorithms~\citep{maddpg, qmix, dqn1, dqn2, ddpg} typically use a noisy version of the target policies $\bm{\pi}$ as  exploration policies.   
In contrast, in CMAE, we decouple exploration policies and target policies.  
Specifically, target polices are trained to maximize the usual external episode return. 
{
Exploration policies $\bm{\mu} = \{\mu_i\}_{i=1}^n$ are trained to reach shared goals, which are under-explored states, as 
the job of an exploration policy  is to collect data from those under-explored states.

To train the exploration policies, shared goals are required. How to choose shared goals from a high-dimensional state space? As discussed in~\secref{sec:intro}, while the state space grows exponentially with the number of agents, the reward function often only depends on a small subset of the state space. Concretely, consider an $n$-agent Push-Box game in a $L \times L$ grid. The size of its state space is $L^{2(n + 1)}$ ($n$ agents plus box in $L^2$ space).
However, the reward function depends only on the location of the box, whose  state space size is $L^2$.
Obviously, to solve the task, exploring the location of the box is much more efficient than uniformly exploring the full state space. 

To achieve this, CMAE first explores a low-dimensional restricted space $\cS_k$ of the state space $\cS$, \ie, $\cS_k \subseteq \cS$. {Formally, given an $M$-dimensional state space $\cal S$, 
the restricted space ${\cal S}_k$ associated with a set $k$ is defined as 
\begin{equation}
    {\cal S}_k = \{\text{proj}_k(s): \forall s \in \cal S\},
    \label{eq:def_restricted}
\end{equation}
where $\text{proj}_k(s) = (s_e)_{e \in k}$ `restricts' the space to elements $e$ in set $k$, \ie, $e\in k$.  Here, $s_e$ is the $e$-th component of the full state $s$, and $k$ is a set from the power set of $\{1,\dots, M\}, \ie,~k \in P(\{1,\dots, M\})$, where $P$ denotes the power set.} 
CMAE gradually moves from low-dimensional restricted spaces ($|k|$ small) to higher-dimensional restricted spaces ($|k|$ larger). This bottom-up space selection is formulated as a search  on a space tree $T_{\text{space}}$, where each node represents a  restricted space $\cS_k$. 

}

\begin{algorithm}[tb]
\SetKwInOut{Input}{input}
 \SetKwInOut{Output}{output}
 \SetKwInput{init}{Init}
\DontPrintSemicolon
\Input{exploration policies $\bm{\mu}=\{\mu_i\}_{i=1}^n$, shared goal $g$, replay buffer $\cal D$}
\For{$\{s^t, \bm{o}^t, \bm{a}^t, s^{t+1}, \bm{o}^{t+1}, r^t\} \in \cal D$}{
    \If{$s^t$ is the shared goal $g$}{
        $r^t = r^t + \hat{r}$\\
    }
    Update $\bm{\mu}$ using $\{s^t, \bm{o}^t, \bm{a}^t, s^{t+1}, \bm{o}^{t+1}, r^t\}$
}
 \caption{Train Exploration Policies (TrainExp)}
 \label{alg:trainexp}
\end{algorithm}

\algref{alg:cmae} summarizes this approach. At each step,  a mixture of the exploration policies $\bm{\mu} = \{\mu_i\}_{i=1}^n$ and target policies $\bm{\pi} =\{\pi_i\}_{i=1}^n$ is used to select actions (line~\ref{alg1:select}). {The resulting experience tuple is then {stored} in a replay buffer $\cal D$. Counters {$\bm{c}$}  
for each restricted space {in the space tree $T_{\text{space}}$} track how often a particular restricted state was observed (line~\ref{alg1:track}). 
The target policies $\bm{\pi}$ are trained directly using the data within the replay buffer $\cal D$. 
Every $N$ episodes, a new restricted space and goal {$g$} is chosen (line~\ref{alg1:space}; see \secref{subsec:selection} for more). Exploration policies are continuously trained to reach the selected goal (\secref{subsec:train_exp})}.

\subsection{Training of Exploration Policies}
\label{subsec:train_exp}

To encourage that exploration policies scout environments in a coordinated manner, we train the exploration policies $\bm{\mu}$ with an additional modified reward $\hat{r}$. This modified reward emphasizes the goal $g$ of the exploration. For example, in the two-agent Push-Box task, we  use a particular joint location of both agents and the box as a goal. 
Note, the agents receive a bonus reward $\hat{r}$ when the shared goal $g$, \ie, the specified state, is reached. The algorithm for training exploration policies is summarized in~\algref{alg:trainexp}: standard policy training with a modified reward.

The goal $g$ is obtained 
via a bottom-up search method. We first explore low-dimensional restricted spaces that are `under-explored.'  We discuss this shared goal and restricted space selection method next. 

\begin{algorithm}[tb]
\DontPrintSemicolon
\SetKwInOut{Input}{input}
 \SetKwInOut{Output}{output}
\SetAlgoLined
\Input{counters $\bm{c}$, space tree $T_{\text{space}}$, replay buffer $\cal D$, episode}
\Output{selected goal $g$}
Compute utility of restricted spaces in $T_{\text{space}}$\\
Sample a restricted space ${\cal S}_{k^\ast}$ from $T_{\text{space}}$ following~\equref{eq:sample}\\
Sample a batch $B = \{s_i\}_{i=1}^{|B|}$ from $\cal D$\\
$g = \argmin_{s\in{ B}} c_{k^\ast}(\text{proj}_{k^\ast}(s))$\\
   \If {episode $\bmod~N' = 0$}{

ExpandSpaceTree($\bm{c}$, $T_{\text{space}}$, $k^\ast$)  \Comment*[r]{\secref{subsec:selection}}
}

 \Return $g$\\
 \caption{Select Restricted Space and Shared Goal (SelectRestrictedSpaceGoal)}
 \label{alg:select_goal}
\end{algorithm}
\setlength{\textfloatsep}{5pt}
\subsection{Shared Goal and Restricted Space Selection}
\label{subsec:selection}
\vspace{-0.1cm}

Since the size of the state space $\cS$ grows exponentially with the number of agents, conventional exploration strategies~\cite{maven, rmax} which strive to uniformly visit all states are no longer tractable.   
To address this issue, we propose to first project the state space to restricted spaces, and then perform shared goal driven coordinated exploration in those restricted spaces. For simplicity, we first assume the state space is finite and discrete. We discuss how to extend CMAE to continuous state spaces in~\secref{subsec:continuous_count}. 
In the following, we first show how to select the goal $g$ given a {selected restricted space}. Then we discuss how to select restricted spaces and {expand the space tree $T_{\text{space}}$.}

\noindent {\bf Shared Goal Selection:} Given a restricted space $\cS_{k^{*}}$ and its associated counter $c_{k^*}$, we choose the goal state $g$ by first uniformly sampling a batch of states $B$ from the replay buffer $\cD$. From those states,  we  select the state with the smallest count as the goal state $g$, \ie, 
\be
g = \arg\min_{s\in B} c_{k^*}({\text{proj}_{k^*}(s)}),
\ee 
where $\text{proj}_{k^*}(s)$ is the projection from state space to the restricted space $\cS_{k^*}$. 
To make this concrete, consider again the 2-agent Push-Box game. A restricted space may consist of only the box's location. Then, from batch $B$, a state in which the box is in a rarely seen location will be selected as the goal. 
{For each restricted space $\cS_{k^*}$, the associated counter $c_{k^\ast}(s_{k^\ast})$ stores the number  of times the state $s_{k^\ast}$ occurred in the low-dimensional restricted space $\cS_{k^\ast}$. }

Given a goal state $g$, we train  exploration policies $\bm{\mu}$ using the method presented in~\secref{subsec:train_exp} (\algref{alg:trainexp}).

{\bf Restricted Space Selection:} 
{For an $M$-dimensional state space, the number of restricted spaces is equivalent to the size of the power set, \ie, $2^M$. It's intractable to study all.}

To address this issue, we propose a bottom-up tree-search mechanism to select under-explored restricted spaces.  We start from low-dimensional restricted spaces and then gradually grow the search tree to explore higher-dimensional restricted spaces. 
Specifically, we maintain a space tree $T_{\text{space}}$ where each node in the tree represents a restricted space. Each restricted space $k$ in the tree is associated with a utility value $u_k$, which guides the 
selection.

The utility permits to identify the under-explored restricted spaces. 
For this, we study a normalized-entropy-based mechanism to compute the utility of each restricted space in $T_\text{space}$. 
Intuitively, under-explored restricted spaces have lower normalized entropy.  
To estimate the normalized entropy of a restricted space $\cS_k$,
we normalize the 
{counter $c_k$}
to obtain a probability distribution
{$p_k(\cdot) = c_k(\cdot)/\sum_{s\in\cS_k} c_k(s)$,}
which is then used to compute the normalized entropy 
\begin{equation}
\begin{aligned}
\eta_k &= H_k/H_{\text{max}, k} 
        = -\left(\sum_{s \in \cS_k}p_k(s)\log p_k(s)\right)/\log(|\cS_k|). 
\label{eq:ent}
\end{aligned}
\end{equation}

Then the utility $u_k$ is given by $u_k = -\eta_k$.
Finally, we sample a restricted space $k^\ast$ following {a categorical distribution over all spaces in the space tree $T_\text{space}$, \ie, 
\be
k^\ast \sim \text{Cat}(\text{softmax}((u_k)_{\cS_k\in T_{\text{space}}})).
\label{eq:sample}
\ee
}%
The restricted space and goal selection method is summarized in~\algref{alg:select_goal}.
{Note that the actual value of $|\cS_k|$ is usually unavailable. We defer the details of estimating $|\cS_k|$ from observed data to~\secref{subsec:ent} in the appendix.} 

\noindent {\bf Space Tree Expansion:}
The space tree $T_\text{space}$ is initialized with one-dimensional restricted spaces.  To consider restricted spaces of higher dimension, we grow the space tree from the current selected restricted space $\cS_{k^\ast}$ every $N'$ episodes. 
{
If $\cS_{k^\ast}$ is $l$-dimensional, we add restricted spaces that are $(l+1)$-dimensional and contain $\cS_{k^*}$ as child nodes of $\cS_{k^\ast}$. 
Formally, we initialize the space tree $T_{\text{space}}^0$ via
\begin{equation}
    T_{\text{space}}^0 = \{{\cal S}_k : |k|=1, k \in P(\{1, \dots, M\})\},
    \label{eq:init_tree}
\end{equation}
where $M$ denotes the dimension of the full state space and $P$ denotes the power set. 
Let $T_\text{space}^{(h)}$ and ${\cal S}_{k^\ast}$ denote the space tree after the $h$-th expansion and the current selected restricted space respectively. 
Note, ${\cal S}_{k^\ast}$ is sampled according to~\equref{eq:sample} and no domain knowledge is used for selecting ${\cal S}_{k^\ast}$.
The space tree after the $(h+1)$-th expansion is 
\begin{equation}
\begin{aligned}
    T_\text{space}^{(h+1)} &= T_\text{space}^h \cup \\ 
    & \{{\cal S}_k : |k|=|k^\ast |+1,k^\ast\subset k, k \in P(\{1,\dots, M\})\},
\end{aligned}
\end{equation}
\ie, all restricted spaces that are ($|k^\ast |+1$)-dimensional and contain $S_{k^\ast}$ are added. The counters associated with the new restricted spaces are initialized from states in the replay buffer. %
}
Specifically, for each newly added restricted space ${\cal S}_{k^*}$, we initialized the corresponding counter to be
\begin{equation}
c_{k^*}(s_{k^*}) = \sum_{s \in \cD} \mathbbm{1}[\text{proj}_{k^*}(s) = s_{k^*}],
\label{eq:count}
\end{equation}
where $\mathbbm{1}[\cdot]$ is the indicator function (1 if argument is true; 0 otherwise). {Once a restricted space was added, we successively increment the counter,} {
\ie, the counter $c_{k^*}$ isn't recomputed from scratch every episode.
} 
This ensures that updating of counters is efficient. Goal selection, space selection and tree expansion are summarized in \algref{alg:select_goal}.

\subsection{Counting in Continuous State Spaces}
\label{subsec:continuous_count}
For high-dimensional continuous state spaces, the counters in CMAE could be implemented using counting methods such as neural density models~\cite{Ostrovski17, Bellemare17} or hash-based counting~\cite{Tang17}. Both  approaches have been shown to be effective for  counting in continuous state spaces. 

In our implementation, we adopt hash-based counting~\cite{Tang17}. Hash-based counting discretizes the state space via a hash function $\phi(s): S \rightarrow \mathbb{Z}$, which maps a given state $s$ to an integer that is used to index into a table.
{Specifically, in CMAE, each restricted space $\cS_k$ is associated with a hash function $\phi_k(s): \cS_k \rightarrow \mathbb{Z}$, which maps the continuous $s_k$ to an integer. The hash function $\phi_k$ is used when CMAE updates or queries the counter associated with restricted space $\cS_k$. }
Empirically, we found CMAE with hash-counting  
to perform  well in environments with continuous state spaces.

\subsection{Analysis}
To provide more insights into how the proposed method improves  data efficiency, we analyze the two major components of CMAE: (1) shared goal exploration and (2) restricted space exploration on a simple multi-player matrix game. 
We first define the multi-player matrix game: 
\begin{example} In a cooperative $2$-player $l$-action matrix game~\cite{Myerson2013}, a payoff matrix $U \in \mathbb{R}^{l \times l}$, which is unobservable to the players, describes the payoff the agents obtain after an action configuration is executed. The agents' goal is to find the action configuration that maximizes the collective payoff. 
\label{ex:matrix}
\end{example}
To efficiently find the action configuration that results in maximal payoff, the agents need to uniformly try different action configurations. We show that exploration with shared goals enables agents to see all distinct action configurations more efficiently than exploration without a shared goal. 
{Specifically, when  exploring  without  shared  goal,  the  agents don’t  coordinate  their  behavior.   It  is  equivalent  to  uniformly picking one action configuration from all configurations.
When  performing  exploration  with  a shared goal,  the  least visited  action  configuration  will  be  chosen  as  the  shared goal.  The two agents coordinate to choose the actions that achieve the goal at each step, making exploration more efficient.}
The following claim formalizes this: 

{\begin{restatable}{claim}{goal}{Consider the $2$-player $l$-action matrix game in~\exref{ex:matrix}. Let $m = l^2$ denote the total number of action configurations.  Let $T_m^{\text{share}}$ and $T_m^{\text{non-share}}$ denote the number of steps needed to see all $m$ action configurations at least once for exploration with shared goal and for exploration without shared goal respectively. Then we have $\expectation[T_m^{\text{share}}] = m$ and $\expectation[T_m^{\text{non-share}}] = m\sum_{i=1}^m \frac{1}{i} = \Theta(m \ln m)$.\footnote{
$\Theta(g)$ means asymptotically bounded above and below by $g$.}
}
\label{clm:goal}\end{restatable}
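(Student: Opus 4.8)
The plan is to recognize the two quantities as, respectively, a deterministic process and the classical coupon-collector process, and to analyze them separately after pinning down the exploration model.

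First I would fix the model. In the matrix game of~\exref{ex:matrix} there is no state and no transition dynamics: one timestep consists of each agent $i$ choosing an action $a_i\in\{1,\dots,l\}$, and the joint action $(a_1,a_2)$ \emph{is} one of the $m=l^2$ action configurations. Under exploration \emph{without} a shared goal the agents do not coordinate; modeling each agent as sampling its action uniformly and independently from its $l$ actions makes the configuration played at each step an i.i.d.\ draw from the uniform distribution on the $m$ configurations. Under exploration \emph{with} a shared goal, the least-visited configuration $g$ (ties broken arbitrarily) is selected, and since here the joint action space coincides with the configuration space, the two agents jointly play exactly $g$ in that step.

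For the shared-goal case, let $N_t$ be the number of distinct configurations seen after $t$ steps, with $N_0=0$. Whenever $N_t<m$, some configuration still has count $0$, so the goal chosen at step $t+1$ has count $0$ and playing it yields $N_{t+1}=N_t+1$; hence $N_t=\min(t,m)$ and $T_m^{\text{share}}=m$ surely, giving $\expectation[T_m^{\text{share}}]=m$. For the non-shared-goal case, $T_m^{\text{non-share}}$ is exactly the coupon-collector time for $m$ coupons. Writing $T_m^{\text{non-share}}=\sum_{i=1}^m X_i$, where $X_i$ is the number of steps needed to go from $i-1$ to $i$ distinct configurations, conditioning on the current set of seen configurations shows that $X_i$ is geometric with success probability $(m-i+1)/m$, so $\expectation[X_i]=m/(m-i+1)$. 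Linearity of expectation then gives
\[
\expectation[T_m^{\text{non-share}}]=\sum_{i=1}^m \frac{m}{m-i+1}=m\sum_{j=1}^m\frac1j = m H_m ,
\]
and $\ln m\le H_m\le 1+\ln m$ yields $\expectation[T_m^{\text{non-share}}]=\Theta(m\ln m)$.

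The main obstacle is not the probability computation — which is textbook — but justifying the two modeling reductions: that ``no shared goal'' genuinely behaves like i.i.d.\ uniform sampling over configurations (this needs the standing assumption that each agent explores its own action set uniformly and independently, so that no implicit coordination is smuggled in), and that ``shared goal'' lets the agents realize the chosen goal in a single step (which holds here precisely because the game is state-free; a stateful or multi-step environment would require an additional reachability argument). I would state these explicitly as the assumptions under which the claim is proved, and then the two computations above complete the argument.
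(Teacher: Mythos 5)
Your proposal is correct and follows essentially the same route as the paper's proof: the shared-goal case is handled by noting a fresh configuration is reached at every step, and the non-shared case is the standard coupon-collector argument with geometric waiting times of mean $m/(m-i+1)$ and the harmonic-sum bound. Your explicit statement of the modeling assumptions (i.i.d.\ uniform joint actions without a shared goal; one-step reachability of the goal because the game is state-free) is a welcome clarification but does not change the substance of the argument.
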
}
\begin{proof}
See supplementary material. 
\end{proof}

Next,  we show that whenever the payoff matrix 
depends only on one agent's action 
the expected number of steps  to see the maximal reward can be further reduced by first exploring restricted spaces. 

{\begin{restatable}{claim}{subspace}{Consider a special case of~\exref{ex:matrix} where the payoff matrix depends only on one agent's action. Let $T^{\text{sub}}$  denote the number of steps needed to discover the maximal reward when exploring the action space of agent one and agent two independently. Let $T^{\text{full}}$ denote the number of steps needed to discover the maximal reward when the full action space is explored. 
Then, we have $T^{\text{sub}} = \gO(l)$ and $T^{\text{full}} = \gO(l^2)$.
}
\label{clm:subspace}\end{restatable}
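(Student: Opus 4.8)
The plan is to reduce both bounds to the covering behaviour of shared-goal exploration established in Claim~\ref{clm:goal}, applied once to the one-dimensional restricted spaces and once to the full joint action space. Without loss of generality assume the payoff depends only on agent one's action, so $U_{ij}=f(i)$ for some $f:\{1,\dots,l\}\to\mathbb{R}$; let $i^\ast\in\argmax_i f(i)$ and $r^\ast=f(i^\ast)$. The key structural observation is that $r^\ast$ is observed as soon as agent one plays $i^\ast$, whatever agent two does, so to ``discover the maximal reward'' it suffices to have swept agent one's action space.

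For $T^{\text{sub}}$: the initial space tree $T_{\text{space}}^0$ of~\equref{eq:init_tree} contains the one-dimensional restricted spaces ${\cal S}_{\{1\}}$ and ${\cal S}_{\{2\}}$, each of size $l$. Running shared-goal exploration inside ${\cal S}_{\{1\}}$ is precisely the setting of Claim~\ref{clm:goal} with the set of configurations replaced by the $l$ actions of agent one --- the argument there depends only on the number of configurations to be covered --- so the least-visited value is repeatedly made the shared goal and the coordinated exploration policy drives agent one to it, visiting all $l$ actions within $l$ steps. In particular agent one plays $i^\ast$, hence $r^\ast$ is observed, within $l$ steps spent on ${\cal S}_{\{1\}}$; together with the at most $l$ steps possibly spent first on ${\cal S}_{\{2\}}$ this gives $T^{\text{sub}}\le 2l=\gO(l)$. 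For $T^{\text{full}}$: Claim~\ref{clm:goal} with $m=l^2$ says shared-goal exploration of the full joint action space visits all $l^2$ configurations within $l^2$ steps, and since $(i^\ast,j)$ yields $r^\ast$ for every $j$, the maximal reward is seen within $l^2$ steps, so $T^{\text{full}}\le l^2=\gO(l^2)$. This is tight in the worst case over the payoff and the tie-breaking: an adversary can force all $(l-1)l$ configurations outside row $i^\ast$ to be visited before any configuration in row $i^\ast$, requiring $\Omega(l^2)$ steps; this worst-case gap, $\gO(l)$ versus $\Theta(l^2)$, is the point of the comparison.

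The step I would be most careful about is the first one: justifying that exploring the single one-dimensional restricted space ${\cal S}_{\{1\}}$ is enough to uncover $r^\ast$. This is exactly where the hypothesis ``the payoff depends only on one agent's action'' enters essentially --- without it, sweeping one agent's action space need not reveal the optimum and one would have to climb the space tree to a two-dimensional restricted space, losing the $\gO(l)$ guarantee. Accordingly I would state the reduction from the $l^2$-configuration covering argument of Claim~\ref{clm:goal} to the $l$-action covering argument explicitly, and record that in the matrix-game model the reward of the executed configuration is observed, so that ``discovering the maximal reward'' is well-defined and is triggered by the first visit to any configuration in row $i^\ast$.
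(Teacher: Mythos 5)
Your argument is correct and is essentially the paper's own proof: both rest on the observation that goal-driven exploration checks a distinct configuration per step, so the restricted search sweeps at most $2l$ configurations (one of which, in agent one's row, yields the maximum) while the full search may need $l^2-l+1$ steps in the worst case. Your explicit reduction to Claim~\ref{clm:goal} and the named optimal action $i^\ast$ only make precise what the paper asserts directly with ``by checking distinct action configurations at each time step.''
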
}
\begin{proof}
See supplementary material. 
\end{proof}

Suppose the payoff matrix depends on all agents' actions. In this case, CMAE will move to the full space after the restricted spaces are well-explored. For this, the expected total number of steps to see the maximal reward is $\gO(l + l^2) = \gO(l^2)$.


\begin{table*}[t]
\centering
\begin{tabular}{l|ccccc}
\specialrule{.15em}{.05em}{.05em}
          &  CMAE (Ours)   & Q-learning & \makecell{Q-learning + Bonus} &  EITI & EDTI \\
\hline
\hline
Pass-sparse &        \textbf{1.00$\pm$0.00}    &    0.00$\pm$0.00 & 0.00$\pm$0.00 & 0.00$\pm$0.00 & 0.00$\pm$0.00 \\
Secret-Room-sparse &   \textbf{1.00$\pm$0.00}    &    0.00$\pm$0.00 & 0.00$\pm$0.00 & 0.00$\pm$0.00 & 0.00$\pm$0.00 \\
Push-Box-sparse  &      \textbf{1.00$\pm$0.00}    &    0.00$\pm$0.00 & 0.00$\pm$0.00 & 0.00$\pm$0.00 & 0.00$\pm$0.00 \\
\hline
Pass-dense &        \textbf{5.00$\pm$0.00}    &    1.25$\pm$0.02  &      1.42$\pm$0.14 & 0.00$\pm$0.00 &0.18$\pm$0.01  \\
Secret-Room-dense &   \textbf{4.00$\pm$0.57}  &   1.62$\pm$0.16   &     1.53$\pm$0.04 & 0.00$\pm$0.00 &0.00$\pm$0.00  \\
Push-Box-dense  &      1.38$\pm$0.21  &    \textbf{1.58$\pm$0.14}  & 1.55$\pm$0.04  & 0.10$\pm$0.01 & 0.05$\pm$0.03 \\

\specialrule{.15em}{.05em}{.05em}
\end{tabular}
\caption{\emph{Final metric} of episode rewards of CMAE and baselines on sparse-reward {(top)} and dense-reward {(bottom)} MPE tasks. }
\label{tb:mpe}
\end{table*}

\begin{figure*}[t]

\centering
\renewcommand{\arraystretch}{0}
\begin{tabular}{ccc}
\includegraphics[width=0.33\textwidth]{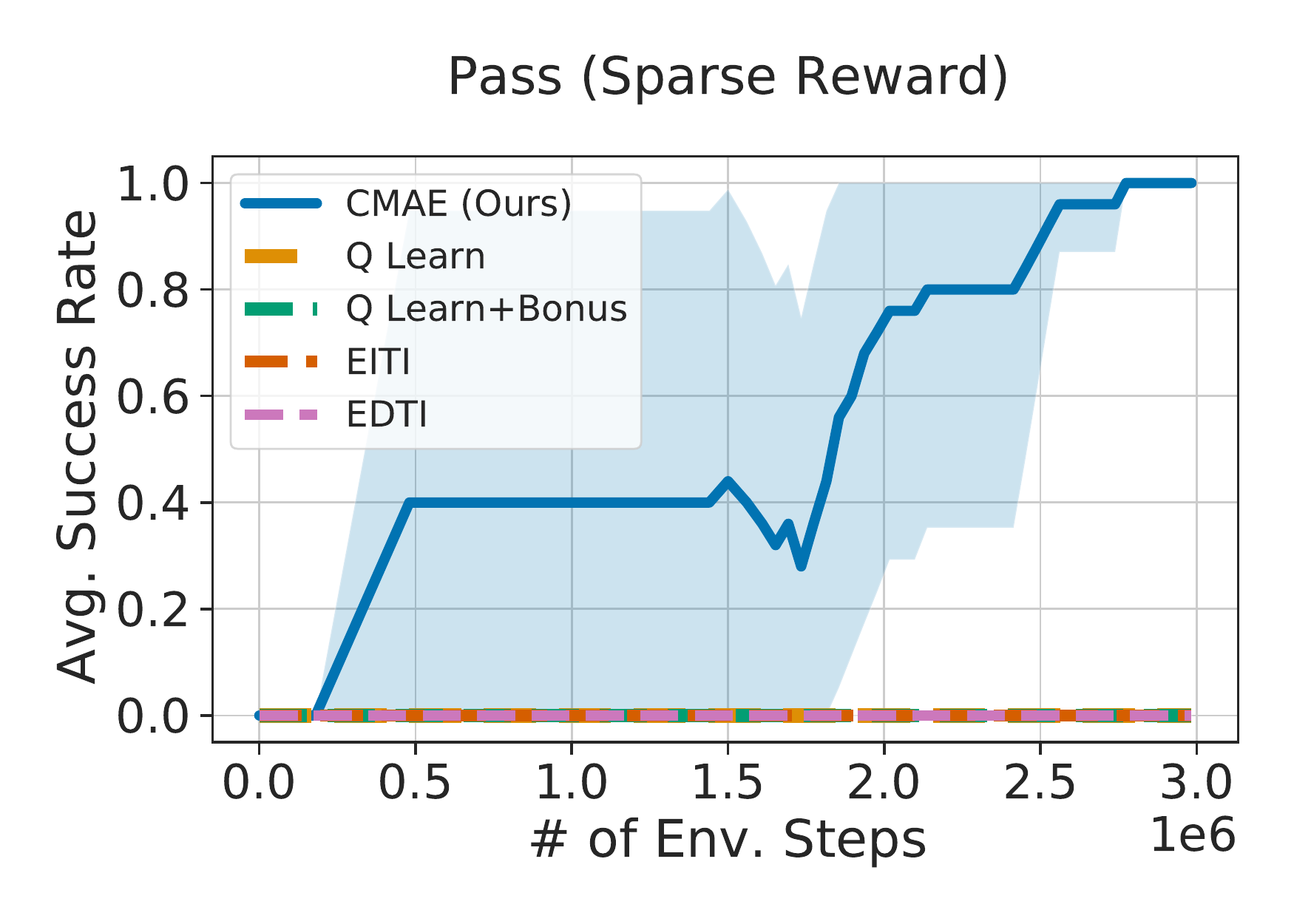}
&
\hspace{-0.6cm}
\includegraphics[width=0.33\textwidth]{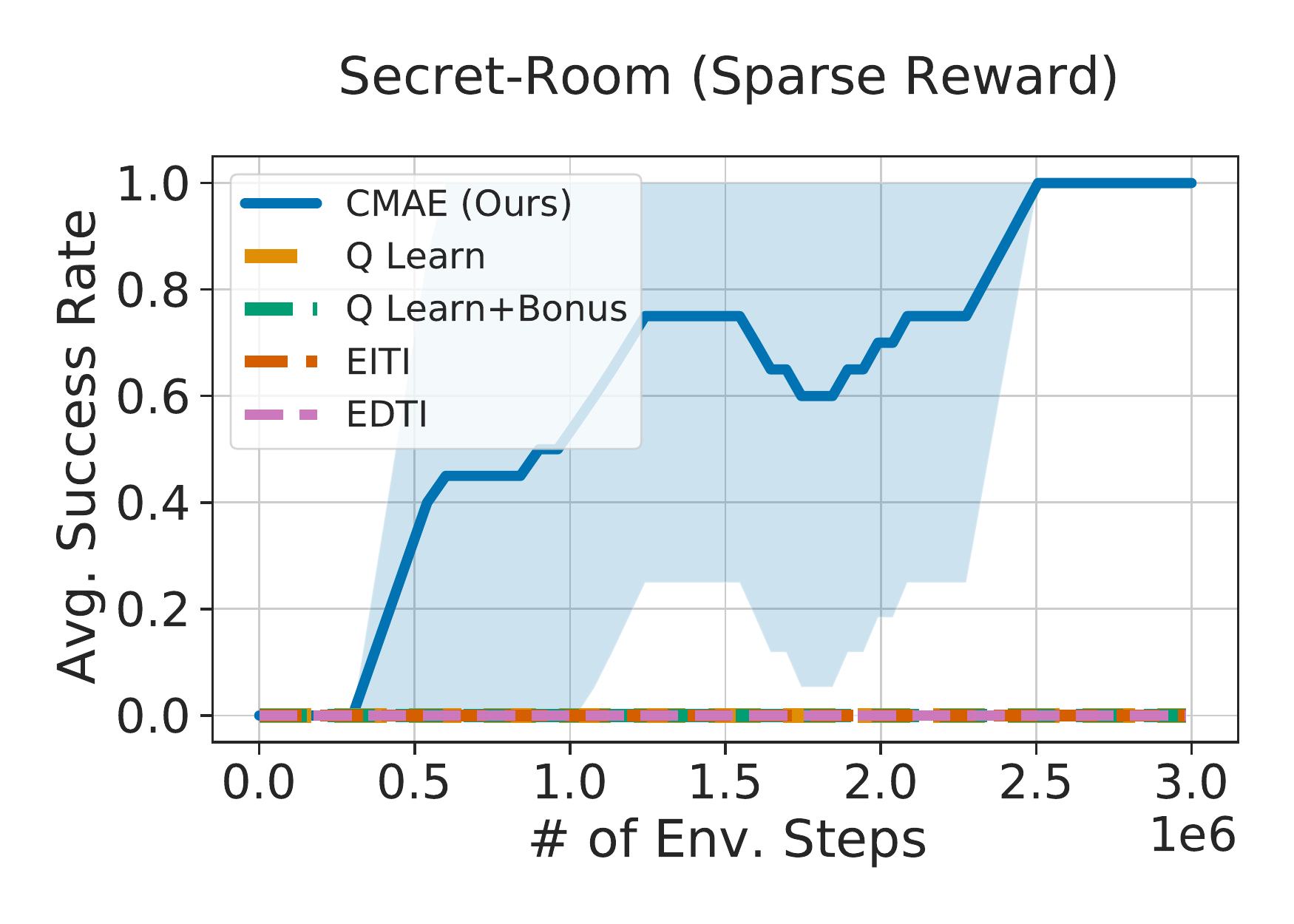}
&
\hspace{-0.6cm}
\includegraphics[width=0.33\textwidth]{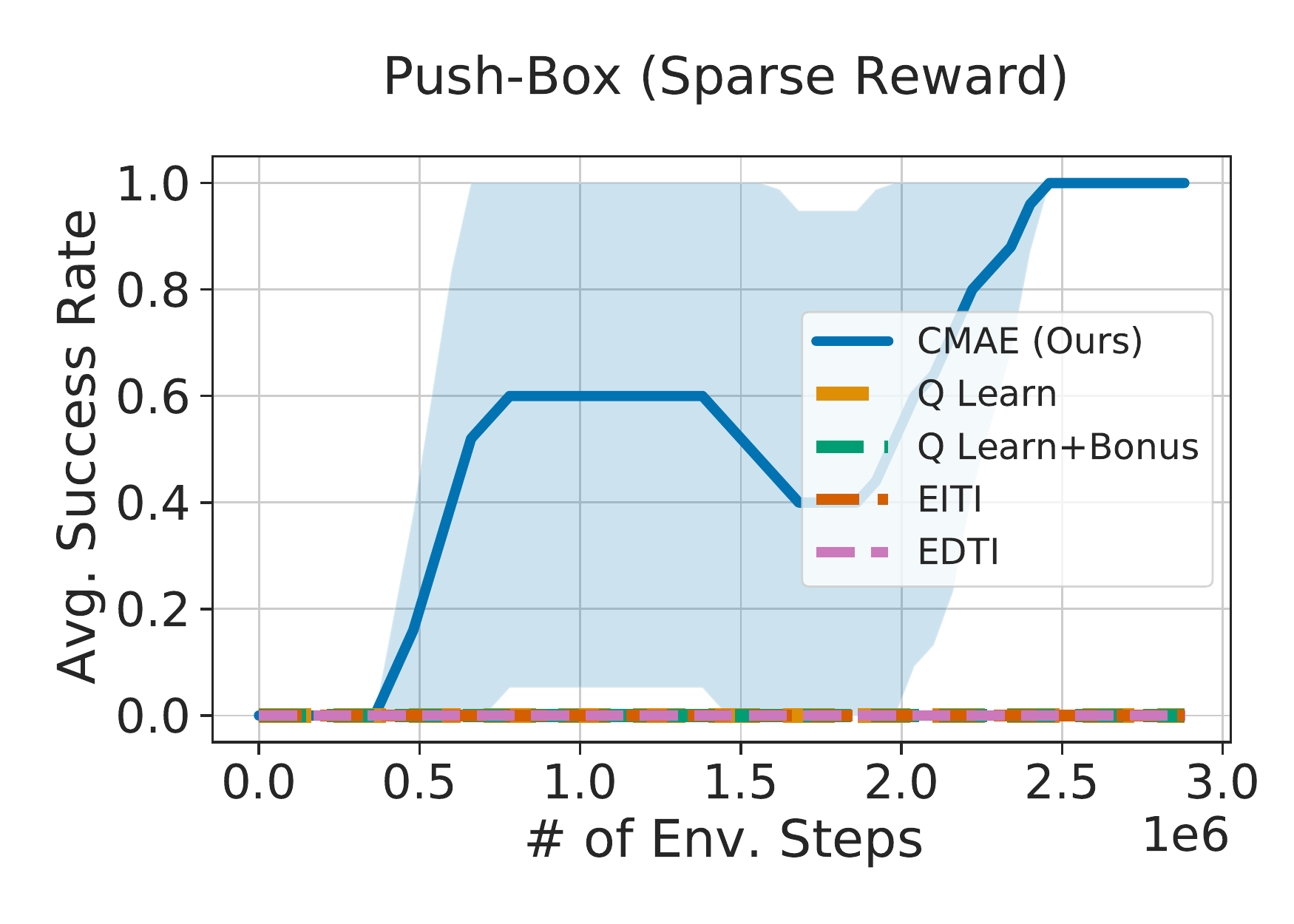}
\\ 
\includegraphics[width=0.33\textwidth]{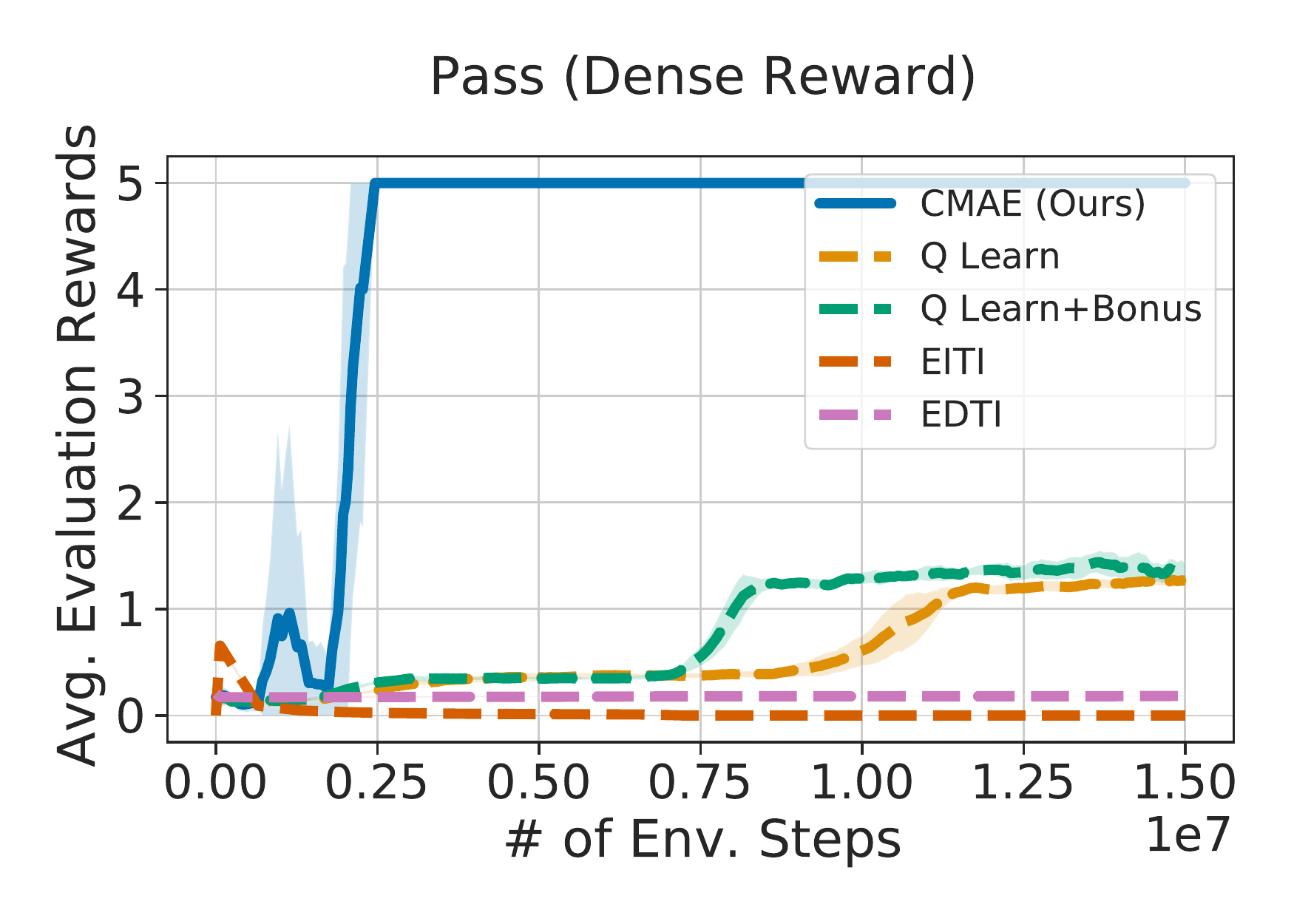}
&
\hspace{-0.6cm}
\includegraphics[width=0.33\textwidth]{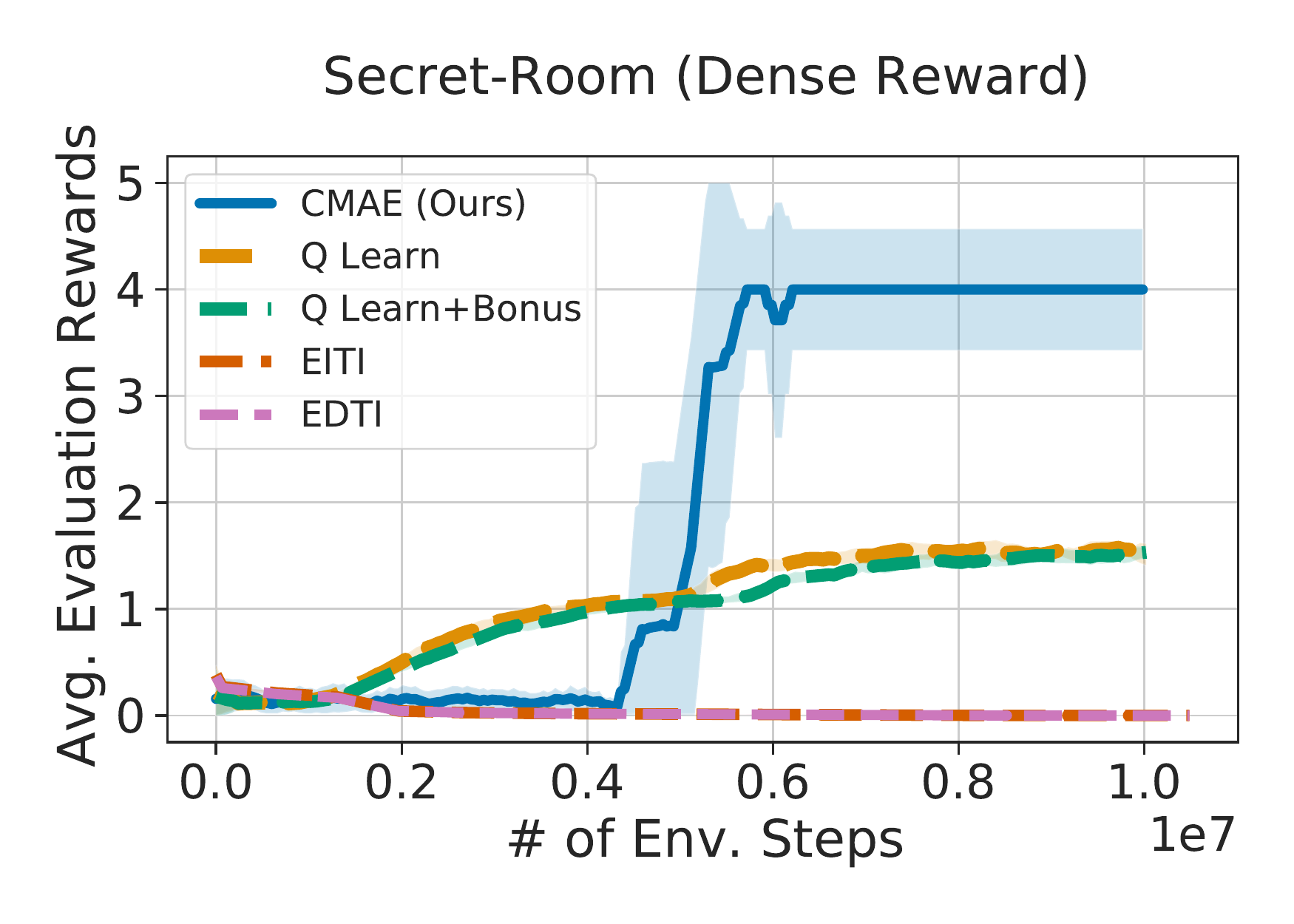}
&
\hspace{-0.6cm}
\includegraphics[width=0.33\textwidth]{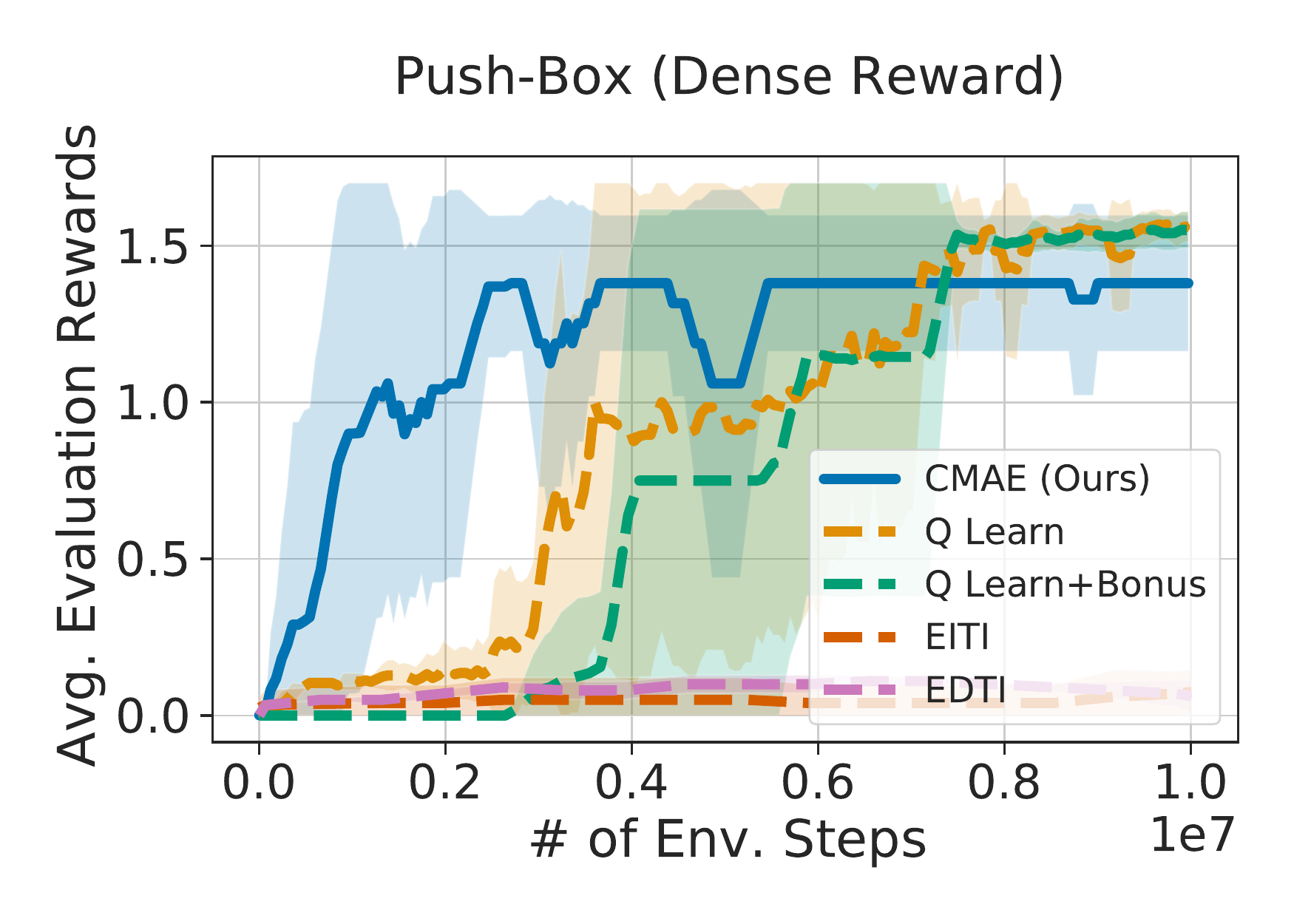}

\end{tabular}%
\caption{Training curves on sparse-reward and dense-reward MPE tasks.} 
\label{fig:mpe_plot}
\end{figure*}

\begin{table*}[t]
\centering
\begin{tabular}{l|ccccc}
\specialrule{.15em}{.05em}{.05em}
          &  CMAE (Ours)  &  Weighted QMIX &  Weighted QMIX + Bonus & QMIX & QMIX + Bonus \\
\hline
\hline
3m-sparse &        \textbf{47.7$\pm$35.1}     & 2.7$\pm$5.1 &11.5$\pm$8.6 &    0.0$\pm$0.0  &      11.7$\pm$16.9\\
2m\_vs\_1z-sparse &   \textbf{44.3$\pm$20.8}   &0.0$\pm$0.0 &19.4$\pm$18.1 &   0.0$\pm$0.0   &     19.8$\pm$14.1 \\
3s\_vs\_5z-sparse  &      0.0$\pm$0.0  &    0.0$\pm$0.0  & 0.0$\pm$0.0  & 0.0$\pm$0.0 &0.0$\pm$0.0 \\
\hline
3m-dense &        98.7$\pm$1.7    &    98.3$\pm$2.5  &      \textbf{98.9$\pm$1.7} & 97.9$\pm$3.6 &97.3$\pm$3.0  \\
2m\_vs\_1z-dense &   98.2$\pm$0.1  &   \textbf{98.5$\pm$0.1}   &     96.0$\pm$1.8 & 97.1$\pm$2.4 &95.8$\pm$1.7  \\
3s\_vs\_5z-dense  &      81.3$\pm$16.1  &    92.2$\pm$6.6  & \textbf{95.3$\pm$2.2}  & 75.0$\pm$17.6 &78.1$\pm$24.4 \\
\specialrule{.15em}{.05em}{.05em}
\end{tabular}
\caption{\emph{Final metric} of success rate ($\%$) of CMAE and baselines on sparse-reward (top) and dense-reward (bottom) SMAC tasks. }
\label{tb:smac}
\end{table*}

\vspace{-0.0cm}
\section{Experimental Results}
\label{sec:experimental}
\vspace{-0.1cm}
We evaluate  CMAE on two challenging environments: (1) {a discrete version of} the multiple-particle environment (MPE)~\cite{maddpg, eiti}; and (2) the  Starcraft multi-agent challenge (SMAC)~\citep{smac}. In both environments, we consider both dense-reward and sparse-reward settings. 
In a sparse-reward setting, agents don't receive any intermediate rewards, \ie, agents only receive a reward when a task is completed.

\textbf{Tasks:}
We first consider the following tasks of the sparse-reward MPE environment:

\emph{Pass-sparse}: Two agents operate within two rooms of a $30 \times 30$ grid. There is one switch in each room. The rooms are separated by a door and agents start in the same room. The door will open only when one of the switches is occupied. 
The agents see collective positive reward and the episode terminates only when both agents changed to the other room. {The state vector contains  $x, y$ locations of all agents and binary variables to indicate if  doors are open. 
}

\emph{Secret-Room-sparse}: Secret-Room  extends  \emph{Pass}. There are two agents and four rooms. One large room on the left and three small rooms on the right. There is one door between each small room and the large room. The switch in the large room controls all three doors. The switch in each small room only controls the room's door. The agents need to navigate to one of the three small rooms, \ie, the target room, to receive positive reward. The grid size is $25 \times 25$.  The task is considered solved if both agents are in the target room. {The state vector contains  $x, y$ locations of all agents and binary variables to indicate if  doors are open.
}

\emph{Push-Box-sparse}: There are two agents and one box in a $15\times15$ grid. Agents need to push the box to the wall to receive positive reward. The box is heavy, so both agents need to push the box in the same direction at the same time to move the box.  The task is considered solved if the box is pushed to the wall. {The state vector contains  $x, y$ locations of all agents and the box.}

{For further details on the sparse-reward MPE tasks, please see~\citet{eiti}.} For completeness, in addition to the aforementioned sparse-reward setting, we also consider a dense-reward version of the three tasks. Please see Appendix~\ref{sec:sup_mpe}  for more details on the environment settings.

To evaluate CMAE on environments with continuous state space, 
we consider three standard tasks in SMAC~\citep{smac}:  \emph{3m}, \emph{2m\_vs\_1z}, and \emph{3s\_vs\_5z}. While the tasks are considered challenging, the commonly used reward is dense, \ie, carefully \textit{hand-crafted} intermediate rewards are used to guide the agents' learning. However, in many real-world applications, designing effective intermediate rewards may be very difficult or infeasible.  Therefore, in addition to the dense-reward setting, we also {consider the sparse-reward setting specified by the SMAC environment~\cite{smac} for the three tasks.} {In SMAC, the  state  vector  contains for all units on the map: $x, y$ locations, health, shield, and unit type. Note SMAC tasks are partially observable, \ie,  agents only observe  information of units within a range.}
Please see Appendix~\ref{sec:sup_smac}  for more details on the SMAC environment.

\textbf{Experimental Setup:} 
For MPE tasks, we combine CMAE with  Q-learning~\cite{SuttonRL, dqn1, dqn2}. We compare CMAE with exploration via information-theoretic influence (EITI) and exploration via decision-theoretic influence (EDTI)~\citep{eiti}. 
EITI and EDTI
results are obtained using the {publicly available code released by the authors.} 

For a more complete comparison, we also show the results of Q-learning with $\epsilon$-greedy and Q-learning with count-based exploration~\cite{Tang17}, where exploration bonus is given when a novel state is visited.

For SMAC tasks, we combine CMAE with QMIX~\citep{qmix}. We compare with QMIX~\citep{qmix}, QMIX with count-based exploration, weighted QMIX~\citep{wqmix}, and weighted QMIX with count-based exploration~\cite{Tang17}. {For QMIX and weighted QMIX, we use the publicly available code released by the authors.} {In all  experiments we use restricted spaces of less than four dimensions.}

Note, to increase efficiency of the baselines with count-based exploration, in both MPE and SMAC experiments, the counts are shared across all agents. 
{We use `+Bonus' to refer to a baseline with count-based exploration.}

\textbf{Evaluation Protocol:}
To ensure a rigorous and fair evaluation, we follow the evaluation protocol suggested by \citet{Henderson17, Colas18}. We evaluate the target policies in an independent evaluation environment and report \emph{final metric}. The \emph{final metric} is  an average episode reward or success rate over the last 100 evaluation episodes, \ie, 10 episodes for each of the last ten policies during training.  We repeat all experiments using five runs with different random seeds. 

Note that EITI and EDTI~\citep{eiti} report the episode rewards \vs the number of model updates   as an evaluation metric. This isn't   common  when evaluating RL algorithms as this plot doesn't reflect an RL approach's data efficiency. In contrast, the episode reward \vs number of environment steps is a more common metric 
for
data efficiency and is adopted by many RL works~\cite{ddpg, acktr, dqn1, dqn2, her, ModelShen20, kf, iswitch}, particularly works on RL exploration~\cite{maven, aiga20, Pathak17, Tang17, OptimisticRashid20}. Therefore, following most prior works, we report the episode rewards \vs the number of environment steps.

\begin{figure}[t]
\centering

\includegraphics[width=0.48\textwidth]{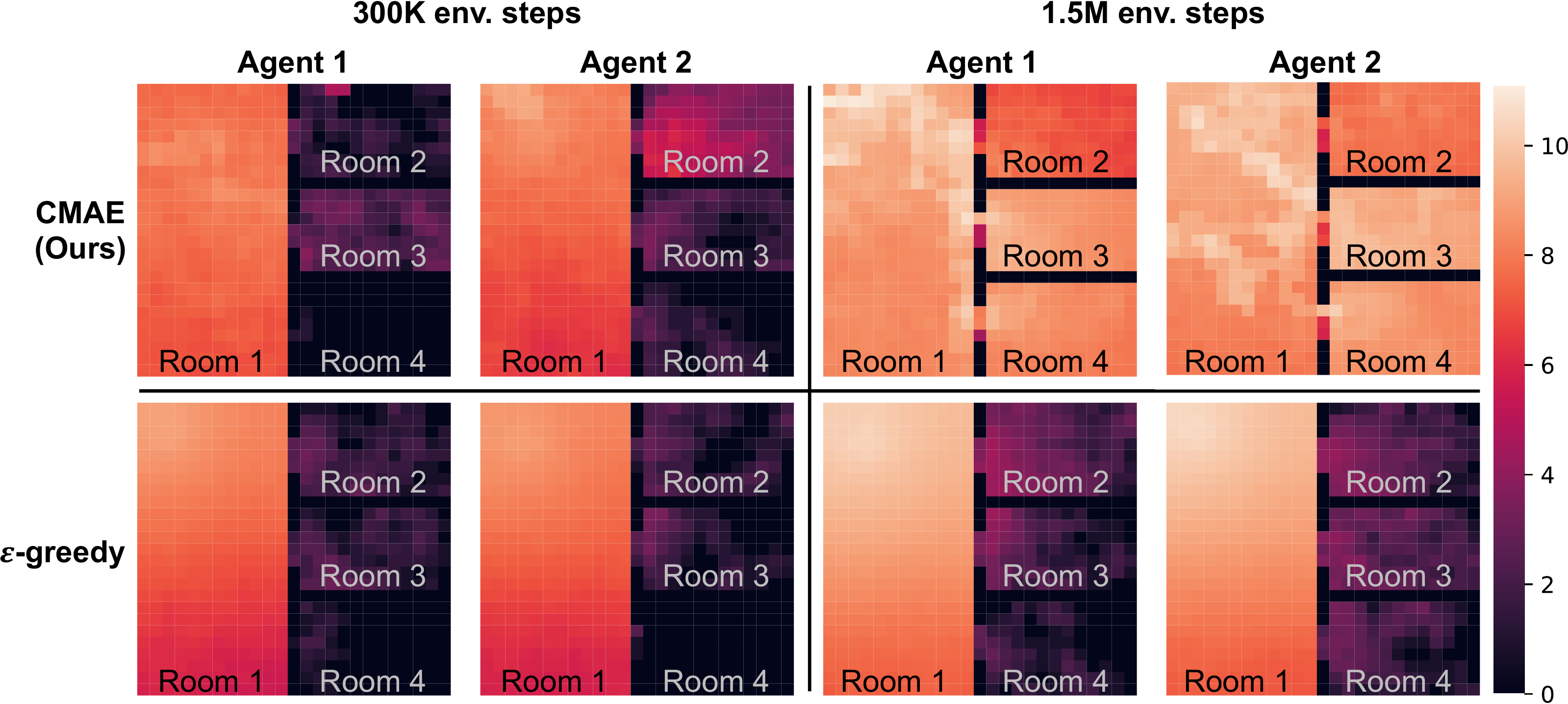}
\caption{Visitation map {(log scale)} of CMAE (top) and $\epsilon$-greedy (bottom) on {the} \emph{Secret-Room} {task}. } 
\label{fig:visit_map}
\vspace{0.3cm}
\end{figure}

\begin{figure*}[t]

\centering
\begin{tabular}{cccc}
\includegraphics[width=0.33\textwidth, height=4cm]{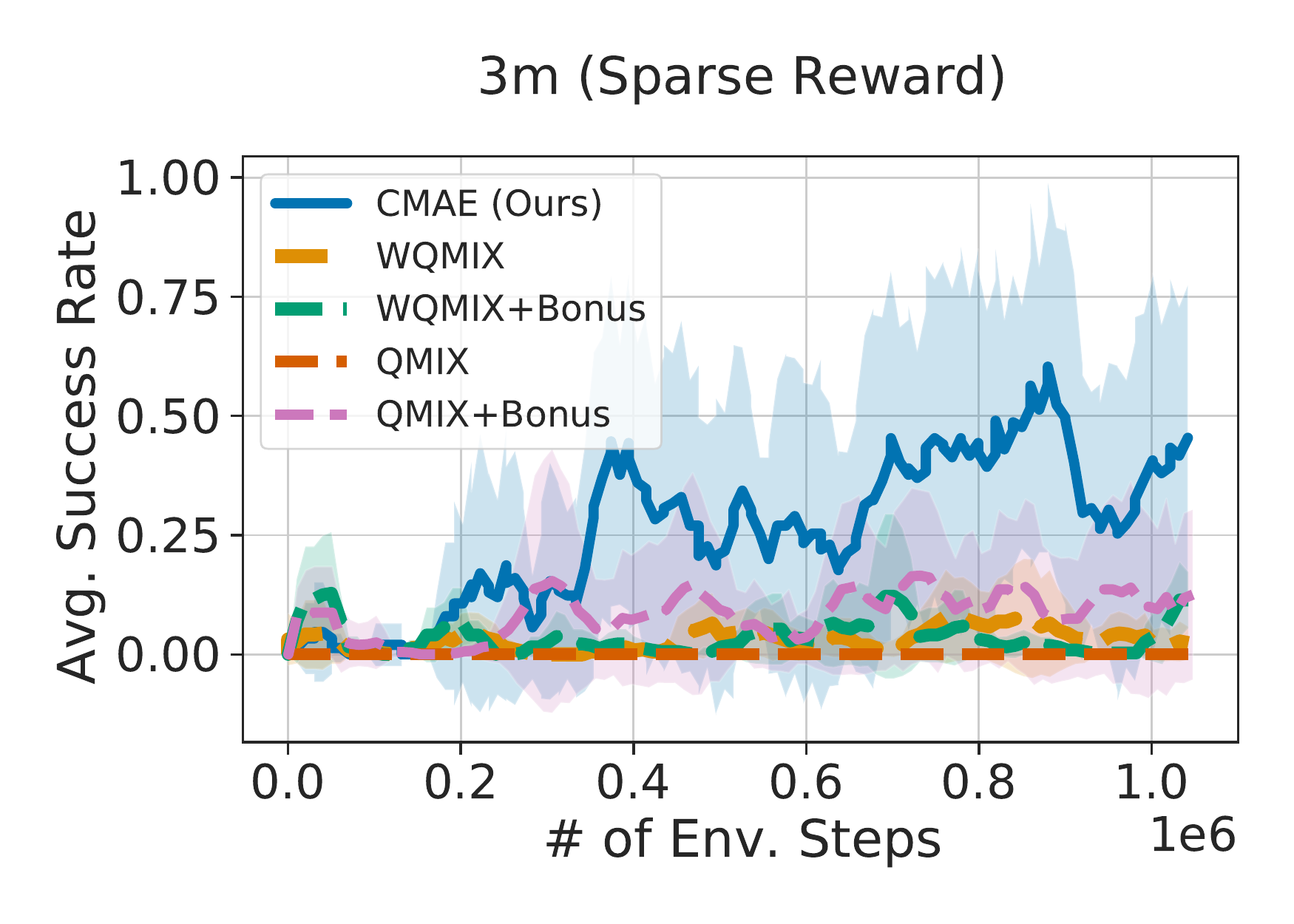}
&
\hspace{-0.6cm}
\includegraphics[width=0.33\textwidth, height=4cm]{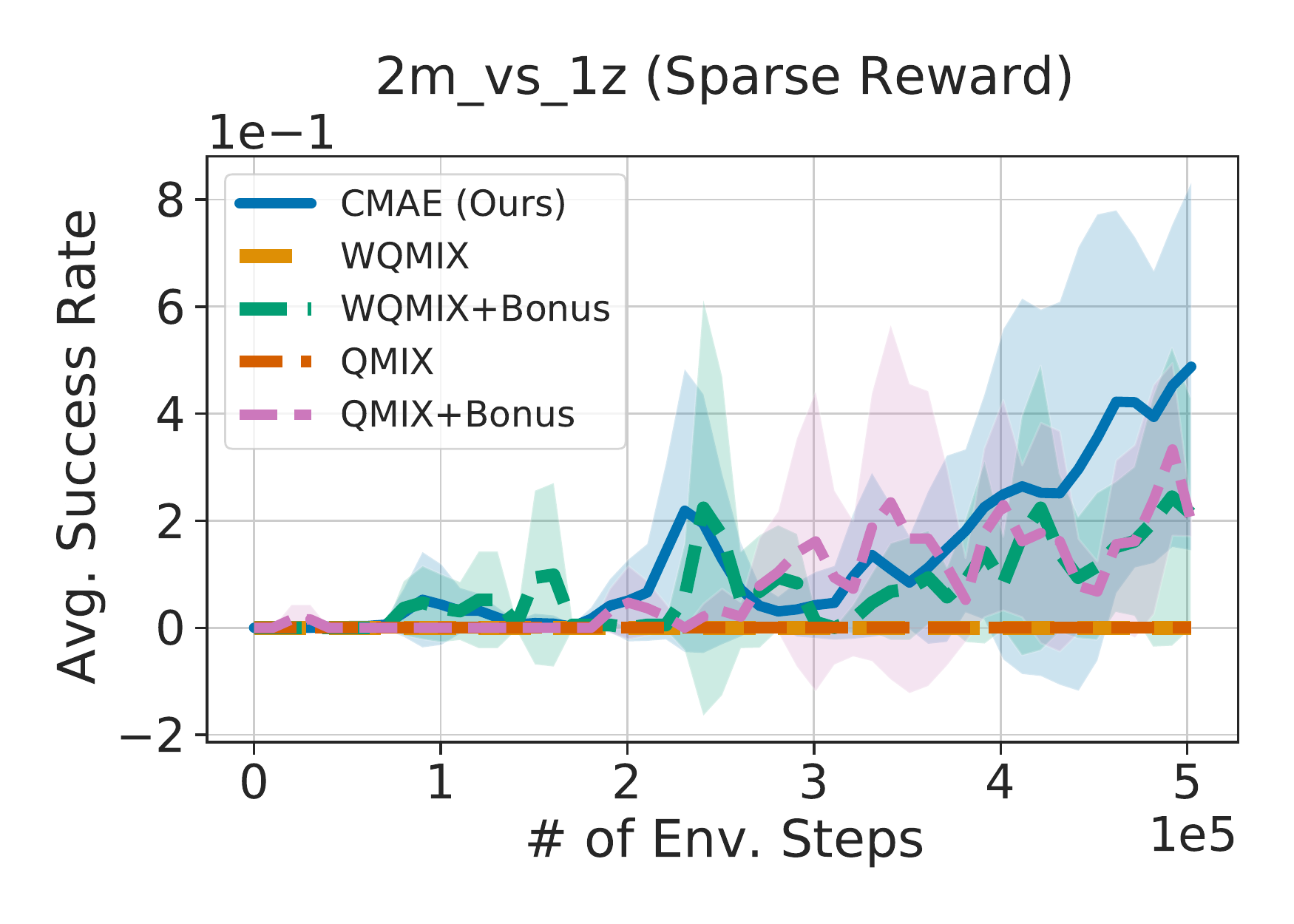}
&
\hspace{-0.6cm}
\includegraphics[width=0.33\textwidth, height=4cm]{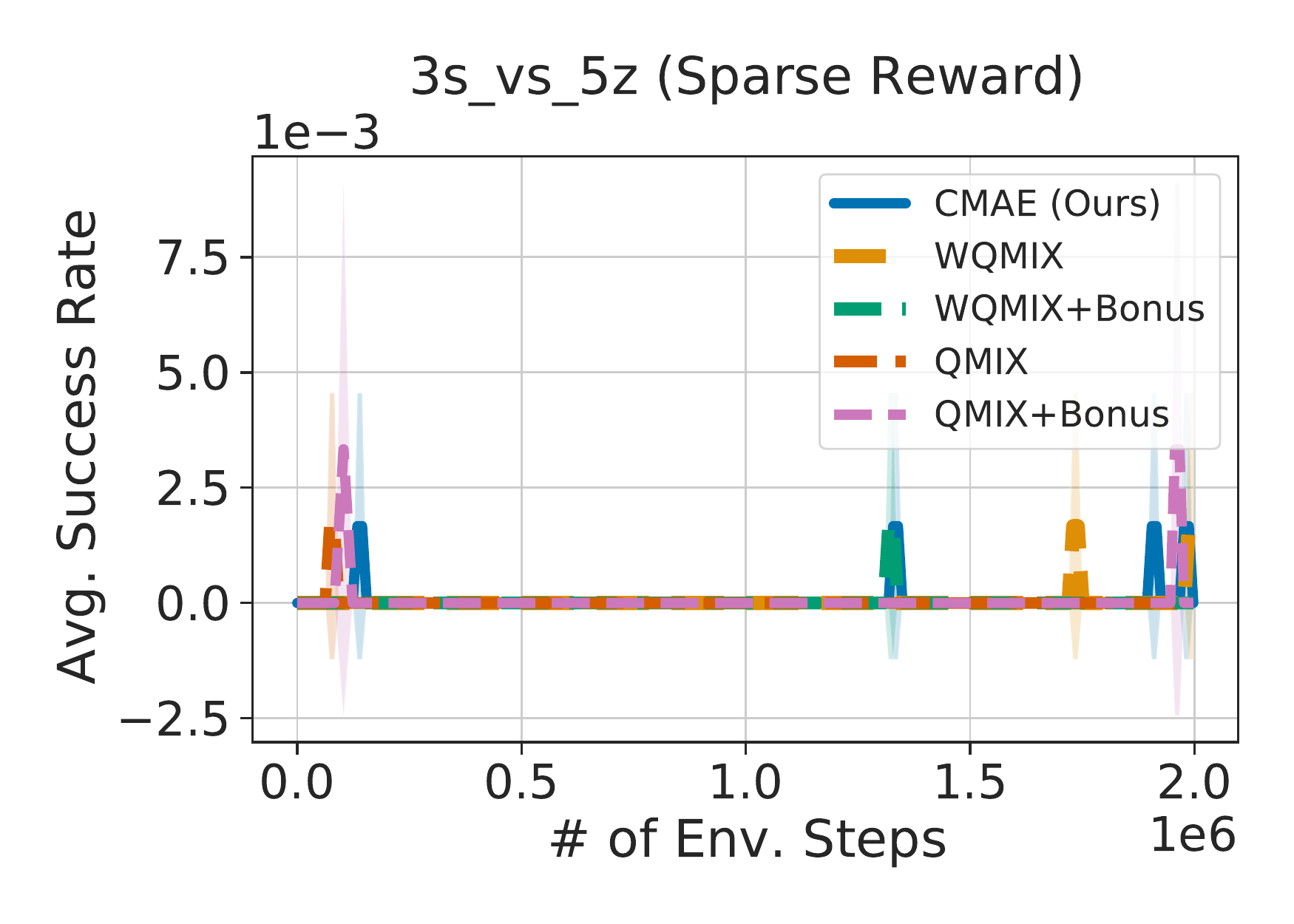}

\end{tabular}%
\caption{Training curves on sparse-reward SMAC tasks.} 
\label{fig:smac_plot}
\end{figure*}

\textbf{Results:} We first compare CMAE with baselines on \emph{Pass}, \emph{Secret Room}, and \emph{Push-Box}. 
The \emph{final metric} and {standard deviation} is reported in~\tabref{tb:mpe}.In
the sparse-reward setting, only CMAE is able to solve the tasks, while all baselines do not learn a meaningful policy. Corresponding training curves {with standard deviation} are included in~\figref{fig:mpe_plot}. CMAE achieves a $100\%$ success rate on \emph{Pass}, \emph{Secret-room}, and \emph{Push-Box} within 3M environment steps. In contrast, baselines cannot solve the task within the given step budget of 3M steps.

Recently,~\citet{aiga20}
{pointed} out
that many existing exploration strategies excel in challenging sparse-reward tasks but fail in simple tasks that can be solved by using classical exploration methods such as $\epsilon$-greedy. To ensure CMAE doesn't fail in simpler tasks, we run experiments on the  dense-reward version of the three tasks. As shown in~\tabref{tb:mpe}, CMAE achieves similar or better performance than the baselines in dense-reward settings.

We also compare the exploration behavior of CMAE to Q-learning with $\epsilon$-greedy exploration using the \emph{Secret-Room} environment. 
{The visit count (in log scale) of each location is visualized in~\figref{fig:visit_map}.}
{In early stages of training,}
both CMAE (top) and $\epsilon$-greedy (bottom) explore only locations in the left room. However, after $1.5$M steps,
{CMAE agents frequently visit the three rooms on the right while $\epsilon$-greedy agents mostly remain within the left room.}

On SMAC, we first compare CMAE with baselines in the sparse-reward setting. 
{
Since the number of nodes in the space tree grows combinatorially, discovering useful high-dimensional restricted spaces for tasks with high-dimensional state space, such as SMAC, may be infeasible.
However, we found empirically that exploring of low-dimensional restricted spaces is already beneficial in a subset of SMAC tasks. }
The results on SMAC tasks are summarized in~\tabref{tb:smac}, where final metric and standard deviation of evaluation success rate is reported. As shown in~\tabref{tb:smac} (top), in \emph{3m-sparse} and \emph{2m\_vs\_1z-sparse}, QMIX and weighted QMIX, which rely on $\epsilon$-greedy exploration, rarely solve the task.
{When combined with count-based exploration, both QMIX and weighted QMIX}
are able to achieve $18\% \text{ to } 20\%$ success rate. CMAE achieves much higher success rate of $47.7\%$ and $44.3\%$ on \emph{3m-sparse} and \emph{2m\_vs\_1z-sparse}, respectively. Corresponding training curves {with standard deviation} are included in~\figref{fig:smac_plot}.
We also run experiments on dense-reward SMAC tasks, where handcrafted intermediate rewards are available. As shown in~\tabref{tb:smac} (bottom), {CMAE} achieves similar performance to state-of-the-art baselines in dense-reward SMAC tasks.

{\textbf{Limitations:}} To show limitations of the proposed method, we run experiments on the sparse-reward version of \emph{3s\_vs\_5z}, which is classified as `hard' even in the dense-reward setting~\cite{smac}. As shown in~\tabref{tb:smac} and \figref{fig:smac_plot}, CMAE as well as all baselines fail to solve the task. In \emph{3s\_vs\_5z}, the only winning strategy is to {force the enemies to scatter around the map} and attend to them one by one~\cite{smac}. Without hand-crafted intermediate reward, we found it to be extremely challenging for any approach to pick up this strategy. This demonstrates that efficient exploration for MARL in sparse-reward settings is still a very challenging and open problem, which requires more attention from the community.

\textbf{Ablation Study:}
\begin{figure}[t]
\centering
\includegraphics[width=0.38\textwidth]{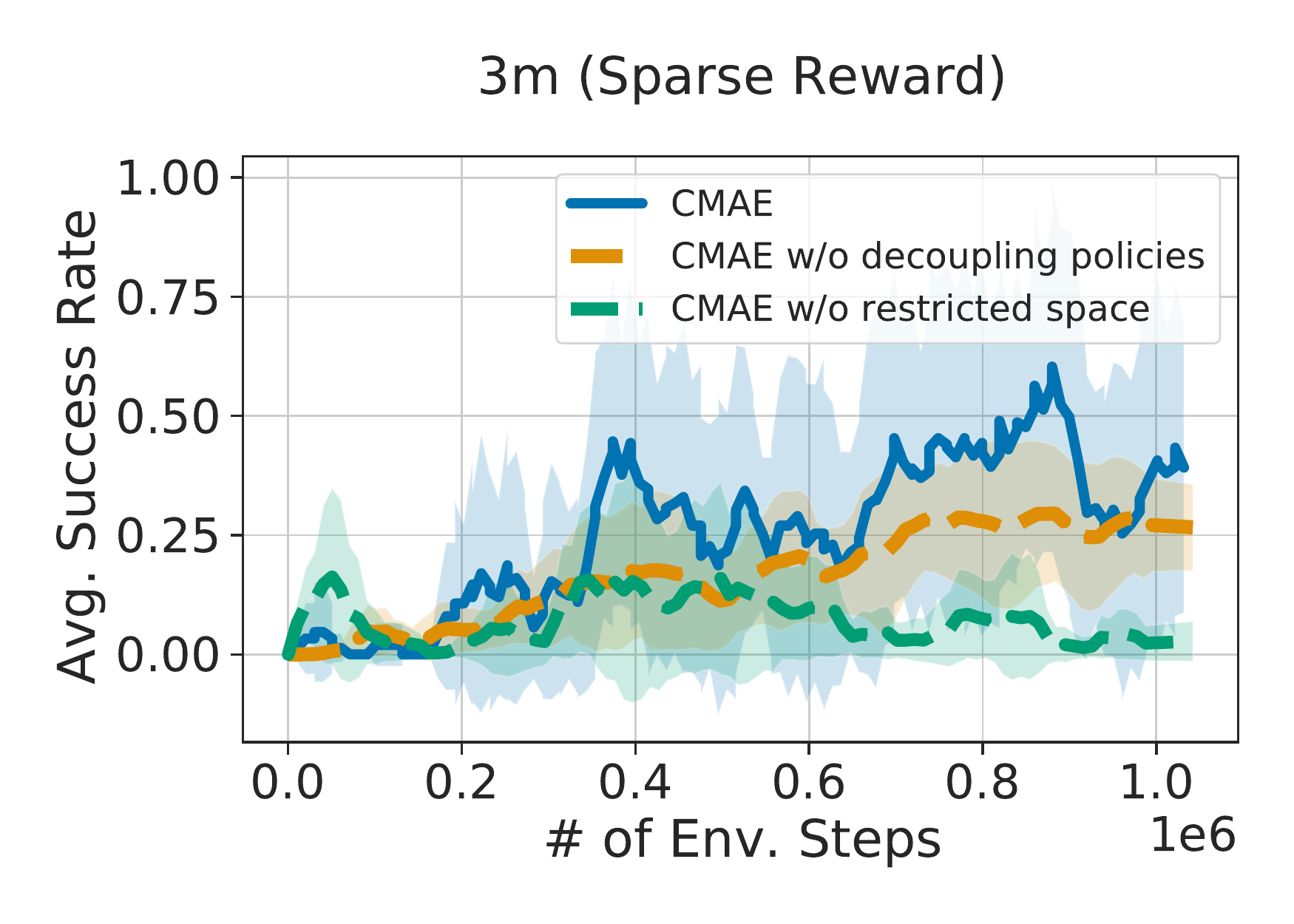}
\caption{Ablation: CMAE, CMAE without decoupling target and exploration policies, and CMAE without restricted space exploration on \emph{3m-sparse}.} 
\label{fig:ablation}
\vspace{0.2cm}
\end{figure}
To better understand the approach, we perform an ablation study to examine the effectiveness of the proposed (1) target and exploration policy decoupling and (2) restricted space exploration. We conduct the experiments on \emph{3m-sparse}. As~\figref{fig:ablation} shows, without decoupling the exploration and target policies, the success rate drops from $47.7\%$ to $25.4\%$. In addition, without restricted space exploration, \ie, by directly exploring the full state space, the success rate drops to $9.4\%$. This demonstrates that the restricted space exploration and policy decoupling are essential to CMAE's success.

\section{Related Work}
\label{sec:related}

We discuss recently developed methods for exploration in reinforcement learning, multi-agent reinforcement learning, and concurrent reinforcement learning subsequently.

\noindent{\bf Exploration for Deep Reinforcement Learning:} 
A wide variety of exploration techniques for deep reinforcement learning have been studied, deviating from classical noise-based methods.
Generalization of count-based approaches, which give near-optimal results in tabular reinforcement learning, to environments with continuous state spaces have been proposed. For instance, \citet{Bellemare17} propose  a density model to measure the agent's uncertainty.  
Pseudo-counts are derived from the density model which give rise to an exploration bonus encouraging assessment of rarely visited states.  Inspired by \citet{Bellemare17},~\citet{Ostrovski17} discussed  a neural density model,  
to estimate the pseudo count, and \citet{Tang17} use a hash function to estimate the count.  

 Besides count-based approaches, meta-policy gradient~\cite{Xu18} uses the target policy's improvement as the reward to train an exploration policy. The resulting exploration policy differs from the actor policy, and  enables more global exploration. \citet{Stadie06} propose an exploration strategy based on assigning an exploration bonus from a concurrently learned environment model.  {\citet{Lee20} cast exploration  as a state marginal matching (SMM) problem and aim to learn a policy for which the state marginal distribution matches a uniform distribution.}
 Other related works on exploration include curiosity-driven exploration~\cite{Pathak17}, diversity-driven exploration~\cite{Hong18}, GEP-PG~\cite{Colas18}, $EX^2$~\cite{Fu17}, bootstrap DQN~\cite{Osband16} and random network distillation~\cite{Burda19}. 
In contrast to our approach, all the techniques mentioned above target  single-agent deep reinforcement learning.

\noindent{\bf Multi-agent Deep Reinforcement Learning (MARL):}  
MARL~\cite{maddpg, Foerster17, pic, wqmix, Jain20, Zhou20, Christianos20, hts-rl, Jain21, updet} has drawn much attention recently. 
MADDPG~\cite{maddpg} uses a central critic that considers other agents' action policies to handle the non-stationary environment issues in the multi-agent setting. DIAL~\cite{Foerster16} uses an end-to-end differentiable architecture that allows agents to learn to communicate. \citet{Jiang18} propose an attentional communication model that learns when communication is helpful for a cooperative setting. \citet{Foerster17} add a `fingerprint' to each transition tuple in the replay memory to track the age of the transition tuple and  stabilize training. In `Self-Other-Modeling' (SOM)~\cite{Raileanu18},  an agent uses its own policy to predict other agents' behavior and  states. 

 While inter-agent communication~\cite{Priya20, Rangwala20, SuccinctZhang20, Ding20, Jiang18, Foerster16, qmix, Omidshafiei17, Jain19} has been considered,  for exploration,  multi-agent 
 approaches  rely on classical noise-based exploration.  As  discussed in~\secref{sec:intro}, a noise-based approach prevents the agents from sharing their understanding of the environment. A team of cooperative agents with a noise-based exploration policy can only explore local regions that are close to their individual actor policy, which contrasts the approach from CMAE. 

Recently, approaches that consider coordinated exploration have been proposed. Multi-agent variational exploration (MAVEN)~\citep{maven} introduces a latent space for hierarchical control. Agents condition their behavior on the latent variable to perform committed exploration.  
Influence-based exploration~\citep{eiti} captures the influence of one agent's behavior on others. Agents are encouraged to visit `interaction points' that will change other agents' behaviour. 

\noindent{\bf Concurrent Deep Reinforcement Learning:}  
\citet{concurrent1} study coordinated exploration in concurrent reinforcement learning, maintaining an environment model and extending posterior sampling such that agents  explore in a coordinated fashion. {\citet{cmrl} proposed concurrent meta reinforcement learning (CMRL) which permits a set of parallel agents to communicate with each other and find efficient exploration strategies.}
The concurrent setting   differs from the multi-agent setting of our approach. 
In a concurrent setting, agents operate in different instances of an environment, \ie, one agent's action has no effect on the observations and rewards received by other agents. In contrast, in the multi-agent setting, agents share the same instance of an environment. An agent's action  changes observations and rewards observed by other agents.

\section{Conclusion}
\label{sec:conclusion}
We propose cooperative multi-agent exploration (CMAE). It defines shared goals and learns coordinated exploration policies. To find a goal for efficient exploration we study restricted space selection which helps, particularly in sparse-reward environments. Empirically, we demonstrate that CMAE increases exploration efficiency. We hope this is  a first step toward efficient coordinated  exploration. 

\textbf{Acknowledgement:}
This work is supported in part by NSF under Grant $\#1718221$, $2008387$, $2045586$, and MRI $\#1725729$, UIUC,
Samsung, Amazon, 3M, and Cisco Systems Inc. RY is supported by a Google Fellowship. 

\bibliography{reference}
\bibliographystyle{icml2021}
\clearpage
\appendix
\section{Appendix}


{\centering \Large \textbf{Appendix: Cooperative Exploration for Multi-Agent Deep Reinforcement Learning}}

In this appendix we first provide the proofs for~\clmref{clm:goal} and~\clmref{clm:subspace} in~\secref{sec:sup_clm1} and~\secref{sec:sup_clm2}. We then provide  information regarding the MPE and SMAC environments (\secref{sec:sup_mpe},~\secref{sec:sup_smac}), implementation details (\secref{sec:sup_implement}), and the absolute metric (\secref{sec:sup_abs_metric}). Next, we provide additional results on MPE tasks (\secref{sec:sup_island}), additional results of baselines (\secref{sec:sup_more_steps}) and training curves (\secref{sec:sup_curve}).

\section{Proof of~\clmref{clm:goal}}
\label{sec:sup_clm1}
\goal*
\begin{proof}
When exploring without shared goal, the agents don't coordinate their behavior. It is equivalent to uniformly picking one action configuration from the $m$ configurations. We aim to show after $T_{m}^{\text{non-share}}$ time steps, the agents tried all $m$ distinct action configurations. 
Let $T_i$ be the number of steps to observe the $i$-th distinct action configuration after seeing $i-1$ distinct configurations. Then 
\be
\expectation[T_{m}^{\text{non-share}}] = \expectation[T_1] + \dots +	\expectation[T_m].
\ee
In addition, let $P(i)$ denotes the probability of observing the $i$-th distinct action configuration after observing $i - 1$ distinct configurations. We have 
\be
P(i) = 1 - \frac{i - 1}{m} = \frac{m - i + 1}{m}.
\ee
Note that $T_i$ follows a geometric distribution with success probability $P(i) = \frac{m - i + 1}{m}$. 
Then the expected number of timesteps to see the $i$-th distinct configuration after seeing $i-1$ distinct configurations is 
\be
\expectation[T_i] =  \frac{m}{m - i + 1}.
\ee
Hence, we obtain 
\begin{equation}
\begin{aligned}
\expectation[T_{m}^{\text{non-share}}] &= \expectation[T_1] + \dots +	\expectation[T_m] \\
        &= \sum_{i=1}^m  \frac{m}{m - i + 1} \\
        &= m \sum_{i=1}^m  \frac{1}{i}.
\label{eq:mn}
\end{aligned}
\end{equation}
From calculus, $\int_1^m \frac{1}{x} dx = \ln m$. Hence we obtain the following inequality
\begin{equation}
\sum_{i=1}^m \frac{1}{i + 1} \le \int_1^m \frac{1}{x} dx = \ln m \le \sum_{i=1}^m \frac{1}{i}.
\label{eq:theta}
\end{equation}
From~\equref{eq:theta}, we obtain $ \sum_{i=1}^m \frac{1}{i} = \mathcal{O} (\ln m)$\footnote{
$\mathcal{O}(g)$ means asymptotically bounded above by $g$.} and $ \sum_{i=1}^m \frac{1}{i} = \Omega(\ln m)$\footnote{
$\Omega(g)$ means asymptotically bounded below by $g$.}, which implies 
\begin{equation}
\sum_{i=1}^m \frac{1}{i} = \Theta(\ln m).
\label{eq:ln}
\end{equation}
Combining~\equref{eq:mn} and~\equref{eq:ln}, we get $\expectation[T_{m}^{\text{non-share}}]  = \Theta(m \ln m)$.

When performing exploration with shared-goal, the least visited action configuration will be chosen as the shared goal. The two agents coordinate to choose the actions that achieve the goal at each step. Hence, at each time step, the agents are able to visit a new action configuration. Therefore, exploration with shared goal needs $m$ timesteps to visit all $m$ action configurations, \ie, $T_m^{\text{share}} = m$, which completes the proof. 
\end{proof}

\section{Proof of~\clmref{clm:subspace}}
\label{sec:sup_clm2}
\subspace*

\begin{proof}
When we explore the action spaces of agent one and agent two independently, there are $2l$ distinct action configurations ($l$ action configurations for each agent) to explore. Since the reward function depends only on one agent's action, one of these $2l$ action configurations must lead to the maximal reward. Therefore, by checking distinct action configurations at each time step, we need at most $2l$ steps to receive the maximal reward, \ie,  $\expectation[T^{\text{sub}}] = \gO(l)$. 

In contrast, when we explore the joint action space of agent one and agent two. There are $l^2$ distinct action configurations. Because the reward function depends only on one agent's action, $l$ of these $l^2$ action configurations must lead to the maximal reward. In the worst case, we choose the $l^2 - l$ action configurations that don't result in maximal reward in the first $l^2 - l$ steps and receive maximal reward at the $l^2 - l + 1$ step. Therefore, we have $\expectation[T^{\text{full}}] = \gO(l^2 - l + 1) = \gO(l^2)$, which concludes the proof.
\end{proof}

\begin{table*}[t]
\centering
\begin{tabular}{l|cc}
\specialrule{.15em}{.05em}{.05em}
          &  CMAE with QMIX  &  \makecell{QMIX + bonus \\ Weighted QMIX + bouns}\\
\hline
\hline
Batch size &        32     & 32\\
Discounted factor & 0.99            &   0.99   \\
Critic learning rate &      0.0005       &   0.0005   \\
Agent learning rate &       0.0005      &    0.0005  \\
Optimizer &    RMSProp         &   RMSProp   \\
Replay buffer size &     5000        &  5000     \\
Epsilon anneal step &     50000        &    \{50000, 1M\}  \\
Exploration bonus coefficient  &    N.A.        &     \{1, 10, 50\} \\
Goal bonus ($\hat{r}$)  & \{0.01, 0.1, 1\} & N.A. \\
\specialrule{.15em}{.05em}{.05em}
\end{tabular}
\caption{Hyper-parameters of CMAE and baselines for SMAC tasks. }
\label{tb:sup_hyper_smac}
\end{table*}

\section{Details Regarding MPE Environments} 
\label{sec:sup_mpe}

In this section we provide  details regarding the sparse-reward and dense-reward version of MPE tasks. We first present the sparse-reward version of MPE:
\begin{itemize}
    \item \emph{Pass-sparse}: Two agents operate within two rooms of a $30 \times 30$ grid. There is one switch in each room, the rooms are separated by a door and agents start in the same room. The door will open only when one of the switches is occupied. 
    The agents see collective positive reward and the episode terminates only when both agents changed to the other room. The task is considered solved if both agents are in the right room. 
    \item \emph{Secret-Room-sparse}: \emph{Secret-Room-sparse}  extends  \emph{Pass-sparse}. There are two agents and four rooms. One large room on the left and three small rooms on the right. There is one door between each small room and the large room. The switch in the large room controls all three doors. The switch in each small room only controls the room's door. All agents need to navigate to one of the three small rooms, \ie, target room, to receive positive reward. The grid size is $25 \times 25$.  The task is considered solved if both agents are in the target room. 
    \item \emph{Push-Box-sparse}: There are two agents and one box in a $15\times15$ grid. Agents need to push the box to the wall to receive positive reward. The box is heavy, so both agents need to push the box in the same direction at the same time to move the box.  The task is considered solved if the box is pushed to the wall. 
    \item \emph{Island-sparse}: Two agents and a wolf operate in a $10\times10$ grid. Agents get a collective reward of 
    300 when crushing the wolf. The wolf and agents have maximum energy of eight and five respectively. The energy will decrease by one when being attacked. Therefore, one agent cannot crush the wolf.  The agents need to collaborate to complete the task. The task is considered solved if the wolf's health reaches zero. 
\end{itemize}

To study the performance of CMAE and baselines in a dense-reward setting, we add `checkpoints' to guide the learning of the agents. Specifically, to add checkpoints, we draw concentric circles around a landmark, \eg, a switch, a door, a box. Each circle is a checkpoint region. Then, the first time an agent steps in each of the checkpoint regions, the agent receive an additional checkpoint reward of $+0.1$.

\begin{itemize}
    \item \emph{Pass-dense}: Similar to \emph{Pass-sparse}, but the agents see dense checkpoint rewards when they move toward the switches and the door.     {Specifically, when the door is open, agents receive up to ten checkpoint rewards when they move toward the door and the switch in the right room. }
    \item \emph{Secret-Room-dense}: Similar to \emph{Secret-Room-sparse}, but the checkpoint rewards based on the agents' distance to the door and the target room's switch are added. {Specifically, when the door is open, agents receive up to ten checkpoint rewards when they move toward the door and the switch in the target room.}

    \item \emph{Push-Box-dense}: Similar to \emph{Push-Box-sparse}, but the checkpoint rewards based on the ball's distance to the wall is added. {Specifically, agents receive up to six checkpoint rewards when they push the box toward the wall.}
    
       \item \emph{Island-dense}: Similar to \emph{Island-sparse}, but the agent receives $+1$ reward when the wolf's energy decrease.
\end{itemize}

\section{Details of SMAC environments} 
\label{sec:sup_smac}
In this section, we present  details for the sparse-reward and dense-reward versions of the SMAC tasks. We first discuss the sparse-reward version of the SMAC tasks.  
\begin{itemize} 

\item \emph{3m-sparse}: There are three marines in each team. Agents need to collaboratively take care of the three marines on the other team. Agents only see a reward of $+1$ when all enemies are taken care of. 
\item \emph{2m\_vs\_1z-sparse:} There are two marines on our team and one Zealot on the opposing team. In \emph{2m\_vs\_1z-dense}, Zealots are stronger than marines. To take care of the Zealot, the marines need to learn to  fire alternatingly so as to confuse the Zealot.  Agents only see a reward of $+1$ when all enemies are taken care of. 
 \item \emph{3s\_vs\_5z-sparse}: There are three Stalkers on our team and five Zealots on the opposing team. Because Zealots counter Stalkers, the Stalkers have to learn to force the enemies to scatter around the map and attend to them one by one.  Agents only see a reward of $+1$ when all enemies are attended to. 
\end{itemize}

The details of the dense-reward version of the SMAC tasks are as follows.

\begin{itemize}
    \item \emph{3m-dense}: This task is similar to \emph{3m-sparse}, but the reward is dense. An agent sees a reward of $+1$ when it causes damage to an enemy's health. A reward of $-1$ is received when its health decreases. All the rewards are collective. A reward of $+200$ is obtained when all enemies are taken care of.
    \item \emph{2m\_vs\_1z-dense}: Similar to \emph{2m\_vs\_1z-sparse}, but the reward is dense. The reward function is similar to  \emph{3m-dense}.
    \item \emph{3s\_vs\_5z-dense}: Similar to \emph{3s\_vs\_5z-sparse}, but the reward is dense. The reward function follows the one in the  \emph{3m-dense} task.
    
\end{itemize}
Note that for all SMAC experiments we used StarCraft version SC2.4.6.2.69232. The results for different versions are not directly comparable since the underlying dynamics  differ. Please see~\citet{smac}\footnote{https://github.com/oxwhirl/smac} for more details regarding the SMAC environment.

\section{Implementation Details}
\label{sec:sup_implement}
\subsection{Normalized Entropy Estimation}
\label{subsec:ent}
As discussed in~\secref{sec:app}, we use~\equref{eq:ent} to compute the  normalized entropy for a restricted space $\cS_k$, \ie,
$$
\eta_k = H_k/H_{\text{max}, k} 
        = -\left(\sum_{s \in \cS_k}p_k(s)\log p_k(s)\right)/\log(|\cS_k|). 
$$
Note that $|\cS_k|$ is typically unavailable even in discrete state spaces. Therefore, we use the number of current observed distinct outcomes $|\hat{\cS_k}|$ to estimate $|\cS_k|$. For instance, suppose $\cS_k$ is a one-dimensional restricted state space and we observe $\cS_k$ takes values $-1, 0, 1$. Then $|\hat{\cS_k}| = 3$ is used to estimate $|\cS_k|$ in~\equref{eq:ent}.
$|\hat{\cS_k}|$ typically gradually increases during exploration. 
In addition, for $|\hat{\cS_k}|=1$, \ie, for a constant restricted space, the normalized entropy will be set to infinity.  

\begin{table*}[t]
\centering
\setlength{\tabcolsep}{4pt}
\begin{tabular}{l|c|ccccc}
\specialrule{.15em}{.05em}{.05em}
          & & CMAE (Ours)   & Q-learning & \makecell{Q-learning + Bonus} &  EITI & EDTI \\
\hline
\hline
\multirow{2}{*}{Pass-sparse} &Final &      \textbf{1.00$\pm$0.00}    &    0.00$\pm$0.00 & 0.00$\pm$0.00 & 0.00$\pm$0.00 & 0.00$\pm$0.00 \\
                             &Absolute & \textbf{1.00$\pm$0.00}    &    0.00$\pm$0.00 & 0.00$\pm$0.00 & 0.00$\pm$0.00 & 0.00$\pm$0.00\\
\hline
\multirow{2}{*}{Secret-Room-sparse} &Final &   \textbf{1.00$\pm$0.00}    &    0.00$\pm$0.00 & 0.00$\pm$0.00 & 0.00$\pm$0.00 & 0.00$\pm$0.00 \\
&Absolute &   \textbf{1.00$\pm$0.00}    &    0.00$\pm$0.00 & 0.00$\pm$0.00 & 0.00$\pm$0.00 & 0.00$\pm$0.00 \\
\hline
\multirow{2}{*}{Push-Box-sparse}&Final  &      \textbf{1.00$\pm$0.00}    &    0.00$\pm$0.00 & 0.00$\pm$0.00 & 0.00$\pm$0.00 & 0.00$\pm$0.00 \\
&Absolute  &      \textbf{1.00$\pm$0.00}    &    0.00$\pm$0.00 & 0.00$\pm$0.00 & 0.00$\pm$0.00 & 0.00$\pm$0.00 \\
\hline
\multirow{2}{*}{Island-sparse}&Final  &      \textbf{0.55$\pm$0.30}    &    0.00$\pm$0.00 & 0.00$\pm$0.00 & 0.00$\pm$0.00 & 0.00$\pm$0.00 \\
&Absolute  &      \textbf{0.61$\pm$0.23}    &    0.00$\pm$0.00 & 0.01$\pm$0.01 & 0.00$\pm$0.00 & 0.00$\pm$0.00 \\
\hline
\hline
\multirow{2}{*}{Pass-dense} &Final &        \textbf{5.00$\pm$0.00}    &    1.25$\pm$0.02  &      1.42$\pm$0.14 & 0.00$\pm$0.00 &0.18$\pm$0.01  \\
&Absolute &        \textbf{5.00$\pm$0.00}    &    1.30$\pm$0.03  &      1.46$\pm$0.08 & 0.00$\pm$0.00 &0.20$\pm$0.01  \\
\hline
\multirow{2}{*}{Secret-Room-dense} &Final &        \textbf{4.00$\pm$0.57}  &   1.62$\pm$0.16   &     1.53$\pm$0.04 & 0.00$\pm$0.00 &0.00$\pm$0.00  \\
&Absolute &        \textbf{4.00$\pm$0.57}    &    1.63$\pm$0.03  &      1.57$\pm$0.06 & 0.00$\pm$0.00 &0.00$\pm$0.00  \\
\hline
\multirow{2}{*}{Push-Box-dense} &Final &        1.38$\pm$0.21  &    \textbf{1.58$\pm$0.14}  & 1.55$\pm$0.04  & 0.10$\pm$0.01 & 0.05$\pm$0.03  \\
&Absolute &        1.38$\pm$0.21    &    \textbf{1.59$\pm$0.04}  &      1.55$\pm$0.04 & 0.00$\pm$0.00 &0.18$\pm$0.01  \\
\hline
\multirow{2}{*}{Island-dense} &Final &        \textbf{138.00$\pm$74.70}  &    87.03$\pm$65.80  & 110.36$\pm$71.99  & 11.18$\pm$0.62 & 10.45$\pm$0.61  \\
&Absolute &        163.25$\pm$68.50    &    141.60$\pm$92.53  &      \textbf{170.14$\pm$62.10} & 16.84$\pm$0.65 &16.42$\pm$0.86  \\

\specialrule{.15em}{.05em}{.05em}
\end{tabular}
\caption{\emph{Final metric} and \emph{absolute metric} of CMAE and baselines on sparse-reward and dense-reward MPE tasks.}
\label{tb:sup_mpe}
\end{table*}

\begin{table*}[t]
\centering
\setlength{\tabcolsep}{4pt}
\begin{tabular}{l|c|ccccc}
\specialrule{.15em}{.05em}{.05em}
          & & CMAE (Ours)  &  Weighted QMIX &  Weighted QMIX + Bonus & QMIX & QMIX + Bonus \\
\hline
\hline
\multirow{2}{*}{3m-sparse} &   Final &     \textbf{47.7$\pm$35.1}     & 2.7$\pm$5.1 &11.5$\pm$8.6 &    0.0$\pm$0.0  &      11.7$\pm$16.9\\
&   Absolute &     \textbf{62.0$\pm$41.0}     & 8.1$\pm$4.5 &15.6$\pm$7.3 &    0.0$\pm$0.0  &      22.8$\pm$18.4\\
\hline
\multirow{2}{*}{2m\_vs\_1z-sparse} &   Final &     \textbf{44.3$\pm$20.8}   &0.0$\pm$0.0 &19.4$\pm$18.1 &   0.0$\pm$0.0   &     19.8$\pm$14.1\\
&   Absolute &     \textbf{47.7$\pm$35.1}     & 0.0$\pm$0.0 &23.9$\pm$16.7 &    0.0$\pm$0.0  &      30.3$\pm$26.7\\
\hline
\multirow{2}{*}{3s\_vs\_5z-sparse} &   Final &     0.0$\pm$0.0  &    0.0$\pm$0.0  & 0.0$\pm$0.0  & 0.0$\pm$0.0 &0.0$\pm$0.0\\
&   Absolute &     0.0$\pm$0.0  &    0.0$\pm$0.0  & 0.0$\pm$0.0  & 0.0$\pm$0.0 &0.0$\pm$0.0\\
\hline
\hline
\multirow{2}{*}{3m-dense} &   Final &     98.7$\pm$1.7    &    98.3$\pm$2.5  &      \textbf{98.9$\pm$1.7} & 97.9$\pm$3.6 &97.3$\pm$3.0\\
&   Absolute &     99.3$\pm$1.8    & 98.8$\pm$0.3 &99.0$\pm$0.3 &    \textbf{99.4$\pm$2.1} &      98.5$\pm$1.2\\
\hline
\multirow{2}{*}{2m\_vs\_1z-dense} &   Final &     98.2$\pm$0.1  &   \textbf{98.5$\pm$0.1}   &     96.0$\pm$1.8 & 97.1$\pm$2.4 &95.8$\pm$1.7\\
&   Absolute &     98.7$\pm$0.4     & 98.6$\pm$1.6 &99.1$\pm$0.9 &    \textbf{99.1$\pm$0.6}  &      96.0$\pm$1.6\\
\hline
\multirow{2}{*}{3s\_vs\_5z-dense} &   Final &     81.3$\pm$16.1  &    92.2$\pm$6.6  & \textbf{95.3$\pm$2.2}  & 75.0$\pm$17.6 &78.1$\pm$24.4\\
&   Absolute &     85.4$\pm$22.6  &    \textbf{95.4$\pm$4.4}  & 95.4$\pm$3.2  & 76.5$\pm$24.3 &79.1$\pm$14.2\\
\specialrule{.15em}{.05em}{.05em}
\end{tabular}
\caption{\emph{Final metric} and \emph{absolute metric} of success rate ($\%$) of CMAE and baselines on sparse-reward and dense-reward SMAC tasks. }
\label{tb:sup_smac}
\end{table*}

\subsection{Architecture and Hyper-Parameters} 
\label{sec:sup_param}

We present the details of architectures and hyper-parameters of CMAE and baselines next. 

\textbf{MPE environments:} 
We combine CMAE with  Q-learning. For \emph{Pass}, \emph{Secret-room}, and \emph{Push-box}, the Q value function is represented via a table. The Q-table is initialized to zero. The update step size for exploration policies and target policies are $0.1$ and $0.05$ respectively. 
For \emph{Island} we use a DQN~\citep{dqn1, dqn2}. The Q-function is parameterized by a three-layer perceptron (MLP) with 64 hidden units per layer and ReLU activation function. The learning rate is $0.0001$ and the replay buffer size is $1M$. In all MPE tasks, the bonus $\hat{r}$ for reaching a goal is $1$, and the discount factor $\gamma$ is $0.95$.

For the baseline EITI and EDTI~\cite{eiti}, we use their default architecture and hyper-parameters. The main reason that EITI and EDTI need a lot of environment steps for convergence according to our observations: a long rollout (512 steps $\times$ 32 processes) between model updates is used. In an attempt to optimize the data efficiency of baselines,  we also study shorter rollout length, \ie, \{128, 256\}, for both EITI and EDTI. However,  we didn't observe an improvement over the default setting. Specifically,  after more than $500$M environment steps of training on \emph{Secret-Room}, EITI with 128 and 256 rollout length achieves $0.0\%$  and $54.8\%$  success rate. EDTI with 128 and 256 rollout length achieves $0.0\%$  and $59.6\%$  success rate, which is much lower than the success rate of $80\%$  achieved by using the default setting. 

\textbf{SMAC environment:} We combine CMAE with QMIX~\citep{qmix}.  Following their default setting, for both exploration and target policies, the agent is a DRQN~\cite{drqn} with a GRU~\cite{gru} recurrent layer with a $64$-dimensional hidden state. Before and after the GRU layer is a fully-connected layer of $64$ units. The mix network has $32$ units. The discount factor $\gamma$ is $0.99$. The replay memory stores the latest $5000$ episodes, and the batch size is $32$. RMSProp is used with a learning rate of $5 \cdot 10^{-4}$. The target network is updated every $100$ episodes. For goal bonus $\hat{r}$ (\algref{alg:trainexp}), we studied $\{0.01, 0.1, 1\}$ and found 0.1 to work well in most tasks. Therefore, we use $\hat{r}=0.1$ for all SMAC tasks. The hyper-parameters of CMAE with QMIX and baselines are summarized in~\tabref{tb:sup_hyper_smac}.

\section{Absolute Metric and Final Metric}
\label{sec:sup_abs_metric}
In addition to the final metric reported in~\tabref{tb:mpe} and~\tabref{tb:smac}, following~\citet{Henderson17, Colas18}, we also report the \emph{absolute metric}. Absolute metric is the best policies' average episode reward over $100$ evaluation episodes. The final metric and absolute metric of CMAE and baselines on MPE and SMAC tasks are summarized in~\tabref{tb:sup_mpe} and~\tabref{tb:sup_smac}.

\section{Additional Results on MPE Task: Island}
\label{sec:sup_island}
In addition to the MPE tasks considered in~\secref{sec:experimental}, we consider one more challenging MPE task: Island. The details of both sparse-reward and dense-reward version of Island, \ie, \emph{Island-sparse} and \emph{Island-dense} are presented in~\secref{sec:sup_mpe}. We compare CMAE to Q-learning, Q-learning with count-based exploration, EITI, and EDTI on both \emph{Island-sparse} and \emph{Island-dense}. The results are summarized in~\tabref{tb:sup_mpe}. As~\tabref{tb:sup_mpe} shows, in the sparse-reward setting, CMAE is able to achieve higher than $50\%$ success rate. In contrast, baselines struggle to solve the task. In the dense-reward setting, CMAE  performs similar to baselines. 
The training curves are shown  in~\figref{fig:sup_mpe-sparse_plot} and~\figref{fig:sup_mpe-dense_plot}. 

\begin{table*}[t]
\centering
\begin{tabular}{l|lll}
\specialrule{.15em}{.05em}{.05em}
         Task (target success rate) &  CMAE (Ours)  &  EITI & EDTI\\
\hline
\hline
Pass-sparse ($80\%$)&        \textbf{2.43M{$\pm$0.10M}}    &    384M{$\pm$1.2M}  &      381M{$\pm$2.8M}   \\
Secret-Room-sparse ($80\%$)&   \textbf{2.35M{$\pm$0.05M}}  &   448M{$\pm$10.0M}   &     382M{$\pm$9.4M}   \\
Push-Box-sparse ($10\%$) &      \textbf{0.47M{$\pm$0.04M}}  &    307M{$\pm$2.3M}  & 160M{$\pm$12.1M}  \\
Push-Box-sparse ($80\%$) &      \textbf{2.26M{$\pm$0.02M}}  &    307M{$\pm$3.9M}  & 160M{$\pm$8.2M}   \\
Island-sparse ($20\%$)&        \textbf{7.50M$\pm$0.12M}  &   480M$\pm$5.2M   &  322M$\pm$1.4M \\
Island-sparse ($50\%$)&        \textbf{13.9M$\pm$0.21M}  &   $>$ 500M   &  $>$ 500M\\

\specialrule{.15em}{.05em}{.05em}
\end{tabular}
\includegraphics[width=0.5\textwidth]{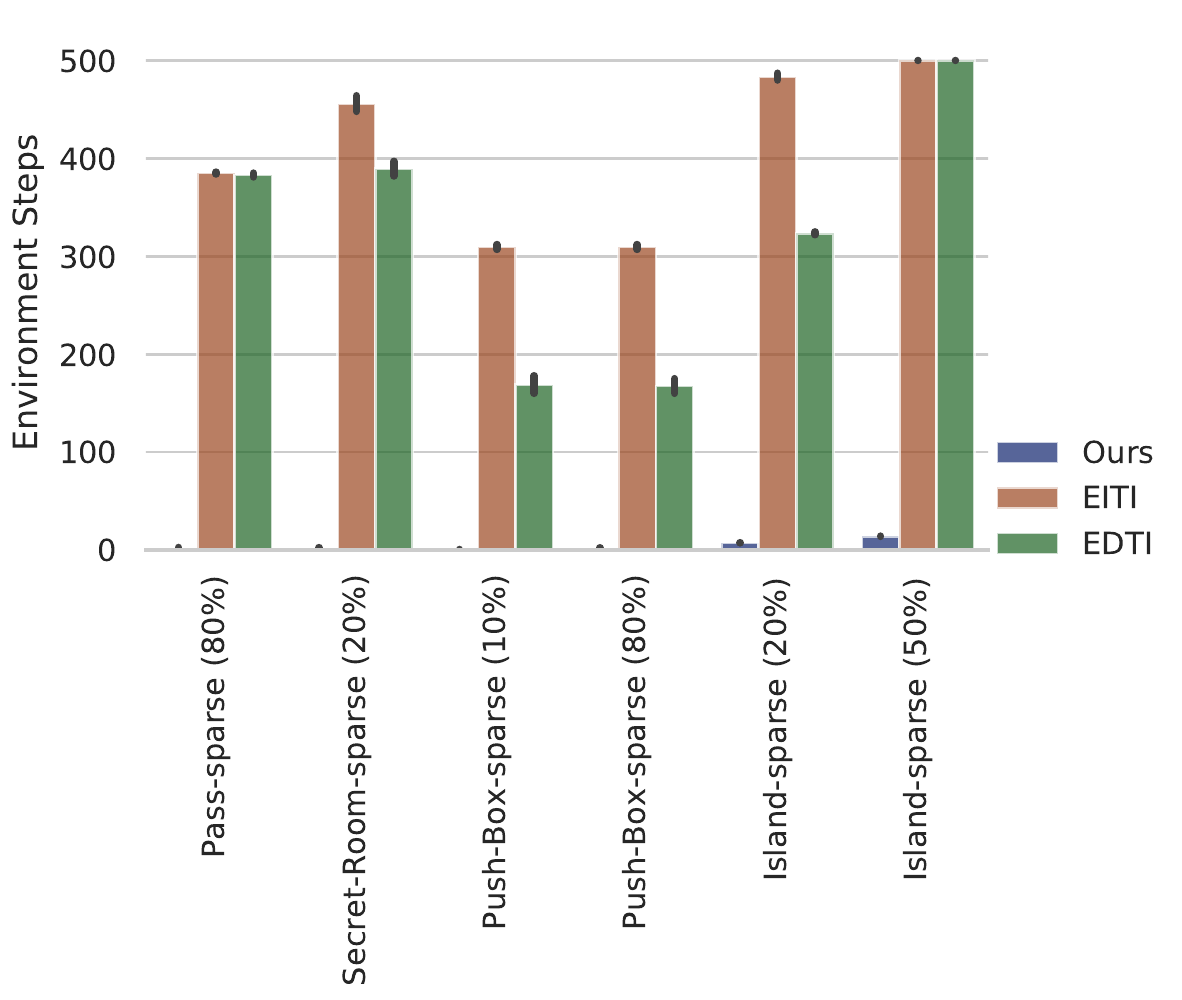}
\caption{Environment steps required to achieve the indicated target success rate on \emph{Pass-sparse}, \emph{Secret-Room-sparse}, \emph{Push-Box-sparse}, and \emph{Island-sparse} environments.}
\label{tb:sup_mpe_step}
\end{table*}

\section{Additional Results of Baselines}
\label{sec:sup_more_steps}
Following the setting of EITI and EDTI~\cite{eiti}, we train both baselines for 500M environment steps.
On \emph{Pass-sparse}, \emph{Secret-Room-sparse}, and \emph{Push-Box-sparse}, we observe that EITI and EDTI~\citep{eiti} need more than 300M steps to achieve an $80\%$ success rate.  
In contrast, CMAE achieves a $100\%$ success rate  within $3$M environment steps. On \emph{Island-sparse}, EITI and EDTI need more than $3$M environment steps to achieve a $20\%$ success rate while CMAE needs  less than $8$M environment steps to achieve the same success rate. 
The results are summarized in~\tabref{tb:sup_mpe_step}.

\section{Additional Training Curves}
\label{sec:sup_curve}
The training curves of CMAE and baselines on both sparse-reward and dense-reward MPE tasks are shown in~\figref{fig:sup_mpe-sparse_plot} and~\figref{fig:sup_mpe-dense_plot}. The training curves of CMAE and baselines on both sparse-reward and dense-reward SMAC tasks are shown in~\figref{fig:sup_smac-sparse_plot} and~\figref{fig:sup_smac-dense_plot}. 
{As shown in~\figref{fig:sup_mpe-sparse_plot},~\figref{fig:sup_mpe-dense_plot},~\figref{fig:sup_smac-sparse_plot}, and ~\figref{fig:sup_smac-dense_plot}, in challenging sparse-reward tasks, CMAE consistently achieves higher success rate than baselines. In dense-reward tasks, CMAE has similar performance to baselines. }

\begin{figure*}[t]

\centering
\renewcommand{\arraystretch}{0}
\begin{tabular}{ccc}
\includegraphics[width=0.33\textwidth]{fig/Room.pdf}
&
\hspace{-0.6cm}
\includegraphics[width=0.33\textwidth]{fig/Secret_Room.pdf}
&
\hspace{-0.6cm}
\includegraphics[width=0.33\textwidth]{fig/Push_Box.pdf}
\\ 
\includegraphics[width=0.33\textwidth]{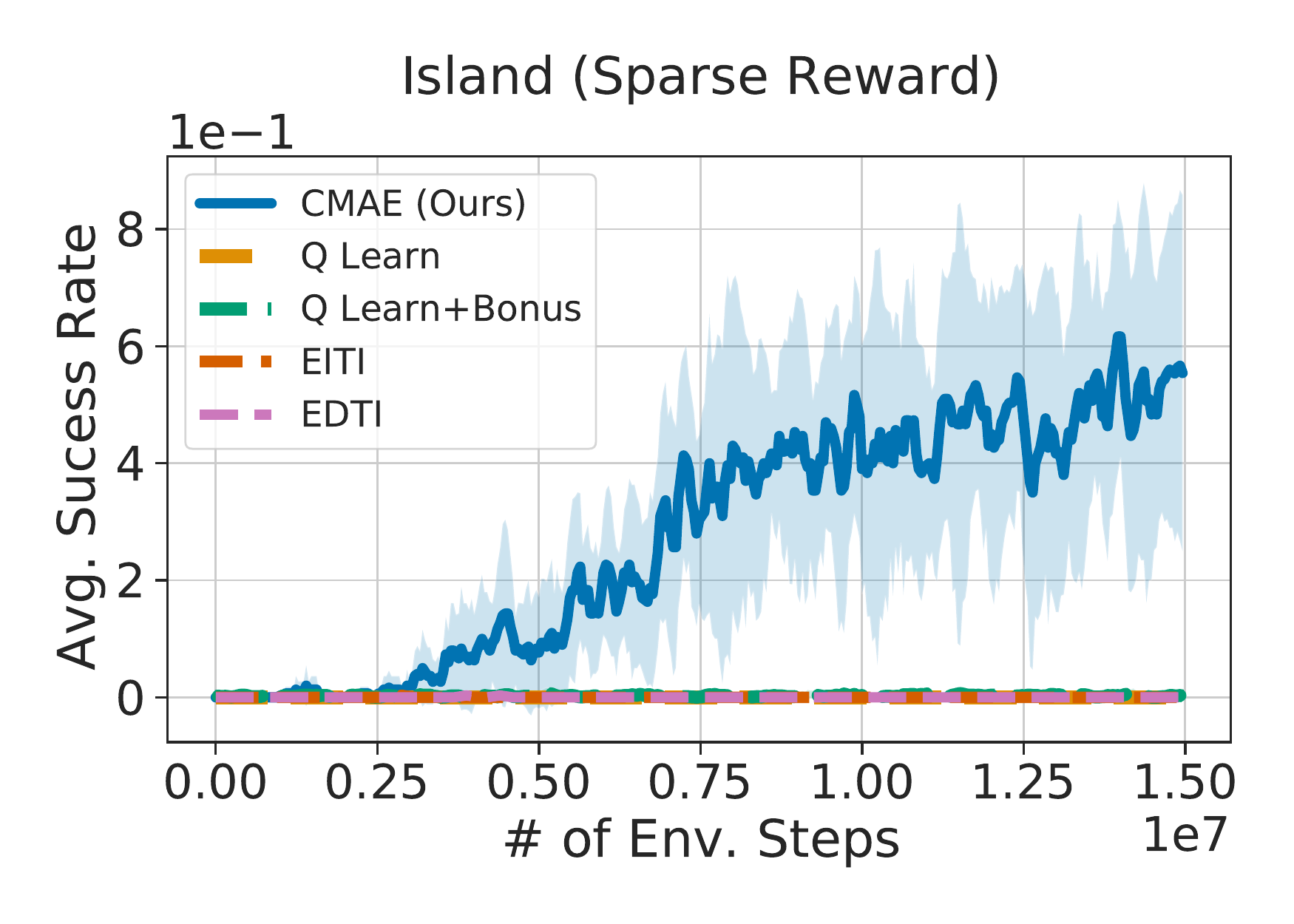}

\end{tabular}%
\caption{Training curves on sparse-reward MPE tasks.} 
\label{fig:sup_mpe-sparse_plot}
\end{figure*}

\begin{figure*}[t]

\centering
\renewcommand{\arraystretch}{0}
\begin{tabular}{ccc}
\includegraphics[width=0.33\textwidth]{fig/Room_ckpt.pdf}
&
\hspace{-0.6cm}
\includegraphics[width=0.33\textwidth]{fig/Secret_Room_ckpt.pdf}
&
\hspace{-0.6cm}
\includegraphics[width=0.33\textwidth]{fig/Push_Box_ckpt.pdf}
\\ 
\includegraphics[width=0.33\textwidth]{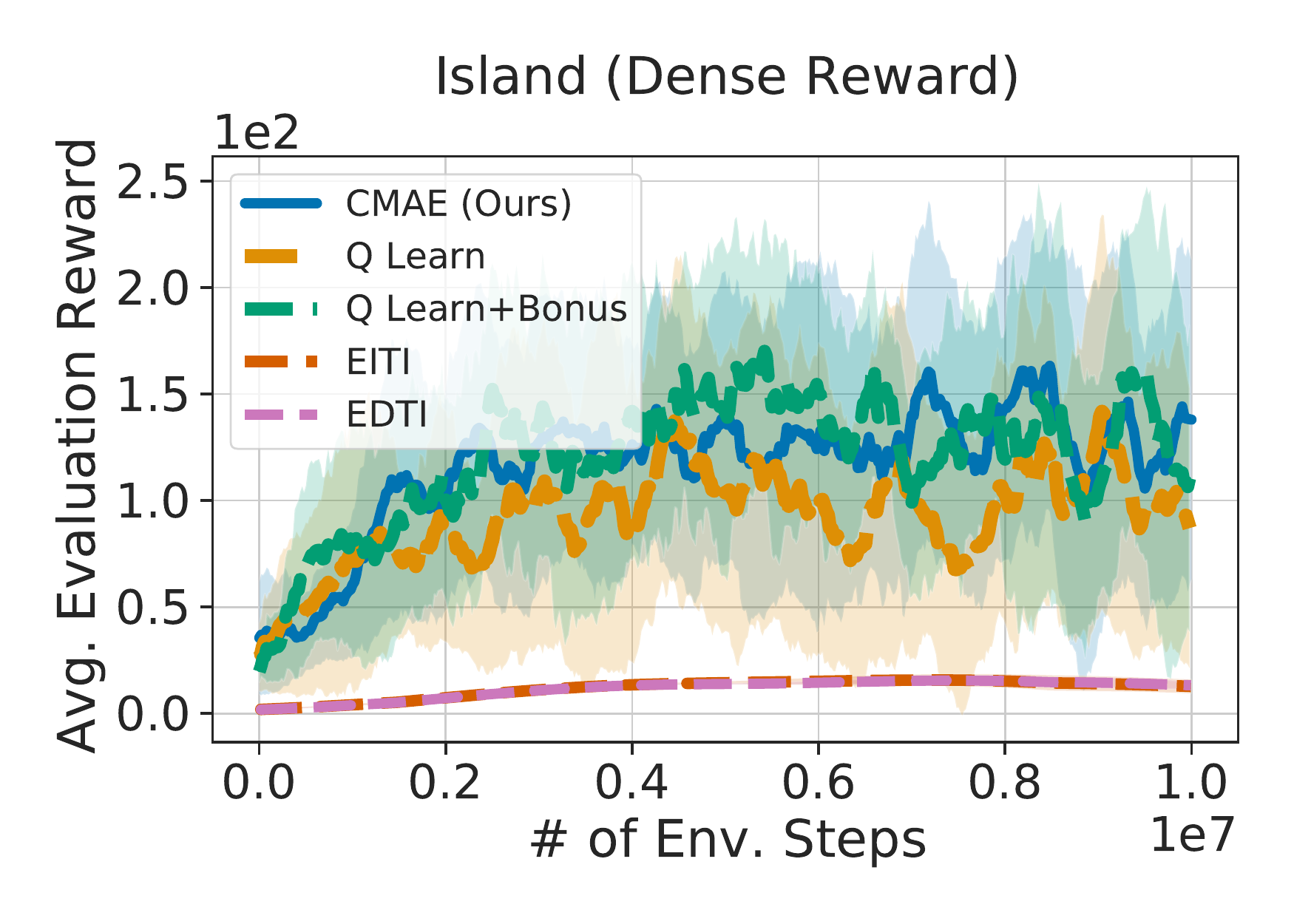}

\end{tabular}%
\caption{Training curves on dense-reward MPE tasks.} 
\label{fig:sup_mpe-dense_plot}
\end{figure*}

\begin{figure*}[t]

\centering
\begin{tabular}{cccc}
\includegraphics[width=0.33\textwidth, height=4cm]{fig/3m_sparse.pdf}
&
\hspace{-0.6cm}
\includegraphics[width=0.33\textwidth, height=4cm]{fig/2m_vs_1z_sparse.pdf}
&
\hspace{-0.6cm}
\includegraphics[width=0.33\textwidth, height=4cm]{fig/3s_vs_5z_sparse.pdf}

\end{tabular}%
\caption{Training curves on sparse-reward SMAC tasks.} 
\label{fig:sup_smac-sparse_plot}
\end{figure*}

\begin{figure*}[t]

\centering
\begin{tabular}{cccc}
\includegraphics[width=0.33\textwidth, height=4cm]{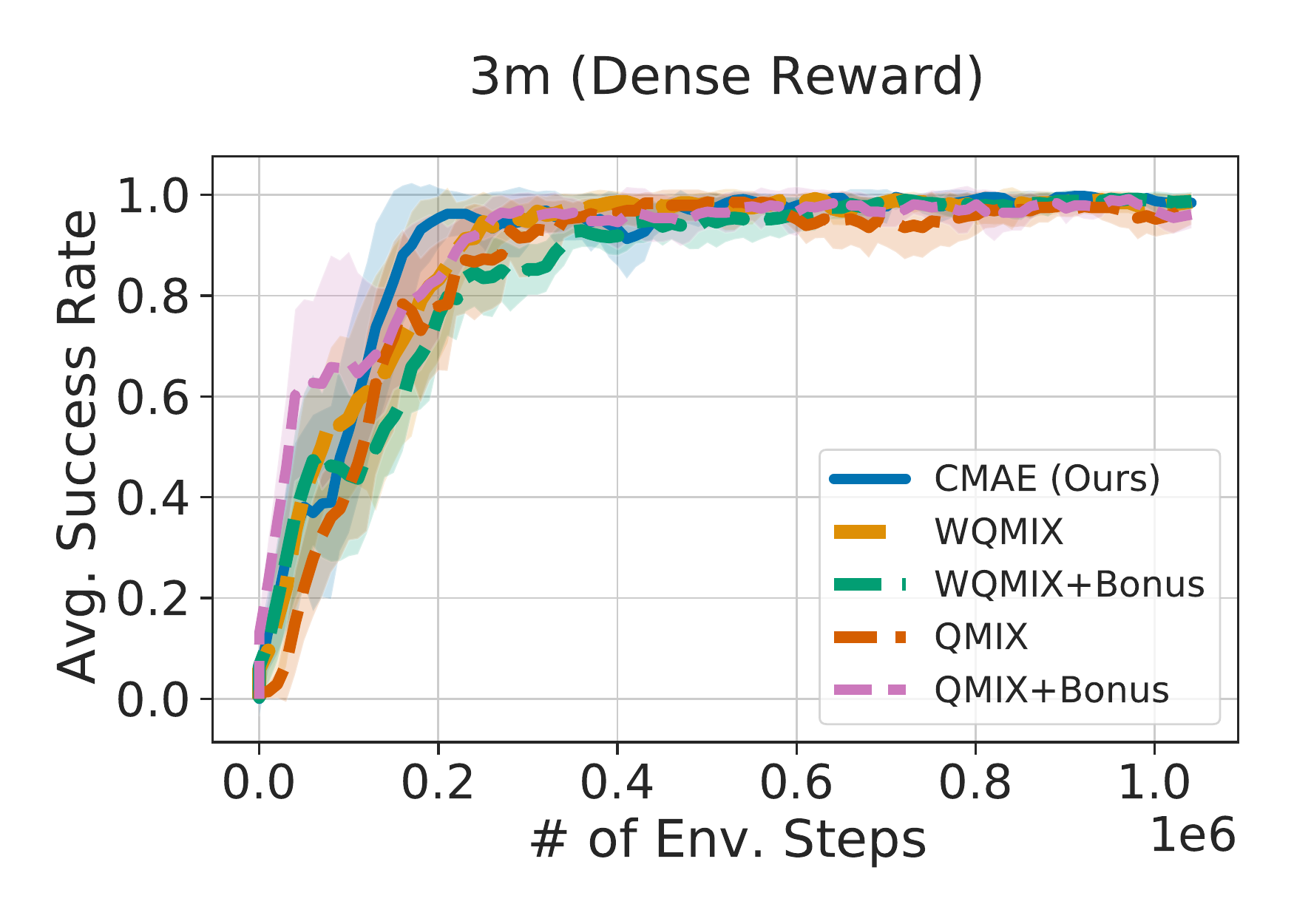}
&
\hspace{-0.6cm}
\includegraphics[width=0.33\textwidth, height=4cm]{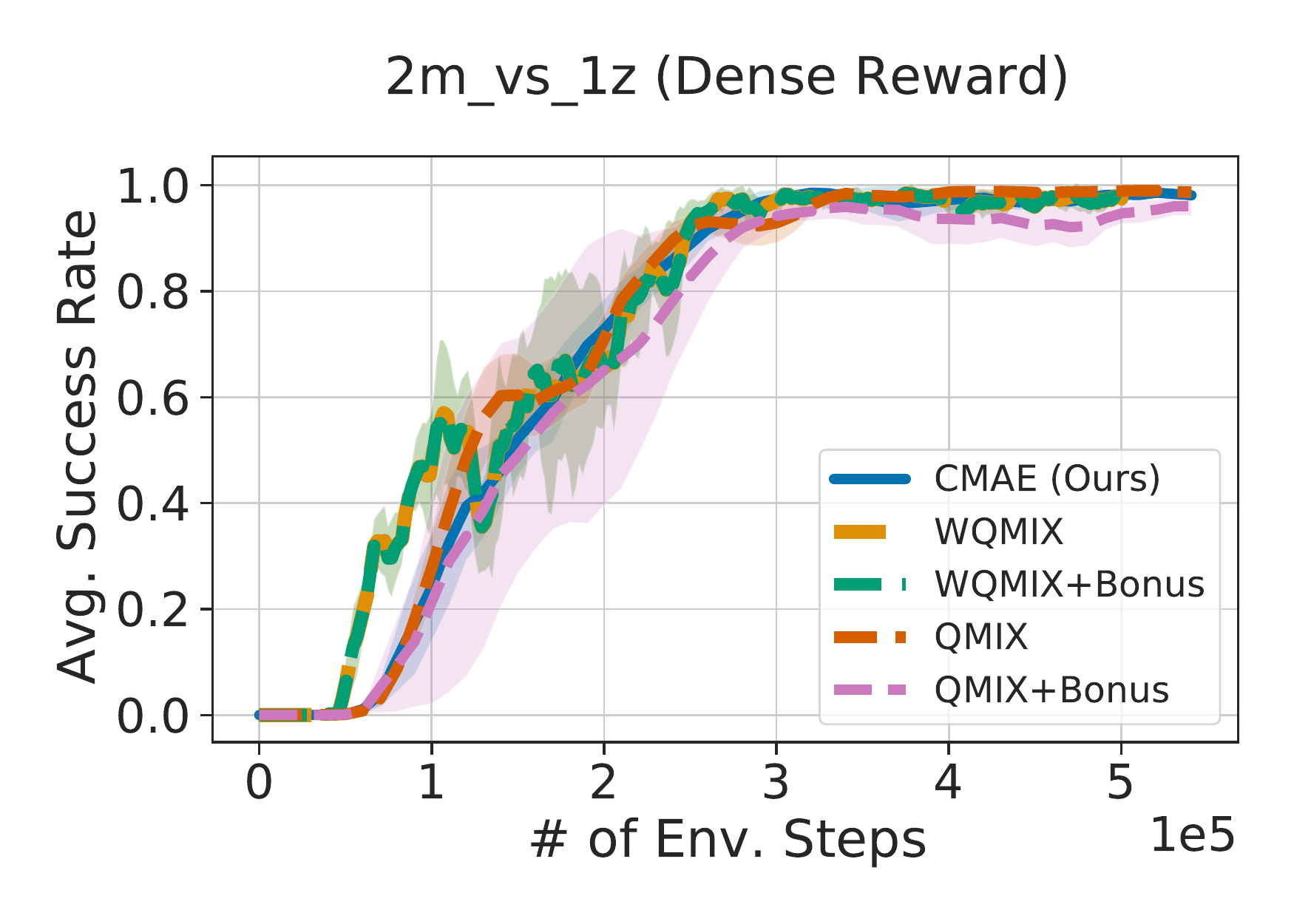}
&
\hspace{-0.6cm}
\includegraphics[width=0.33\textwidth, height=4cm]{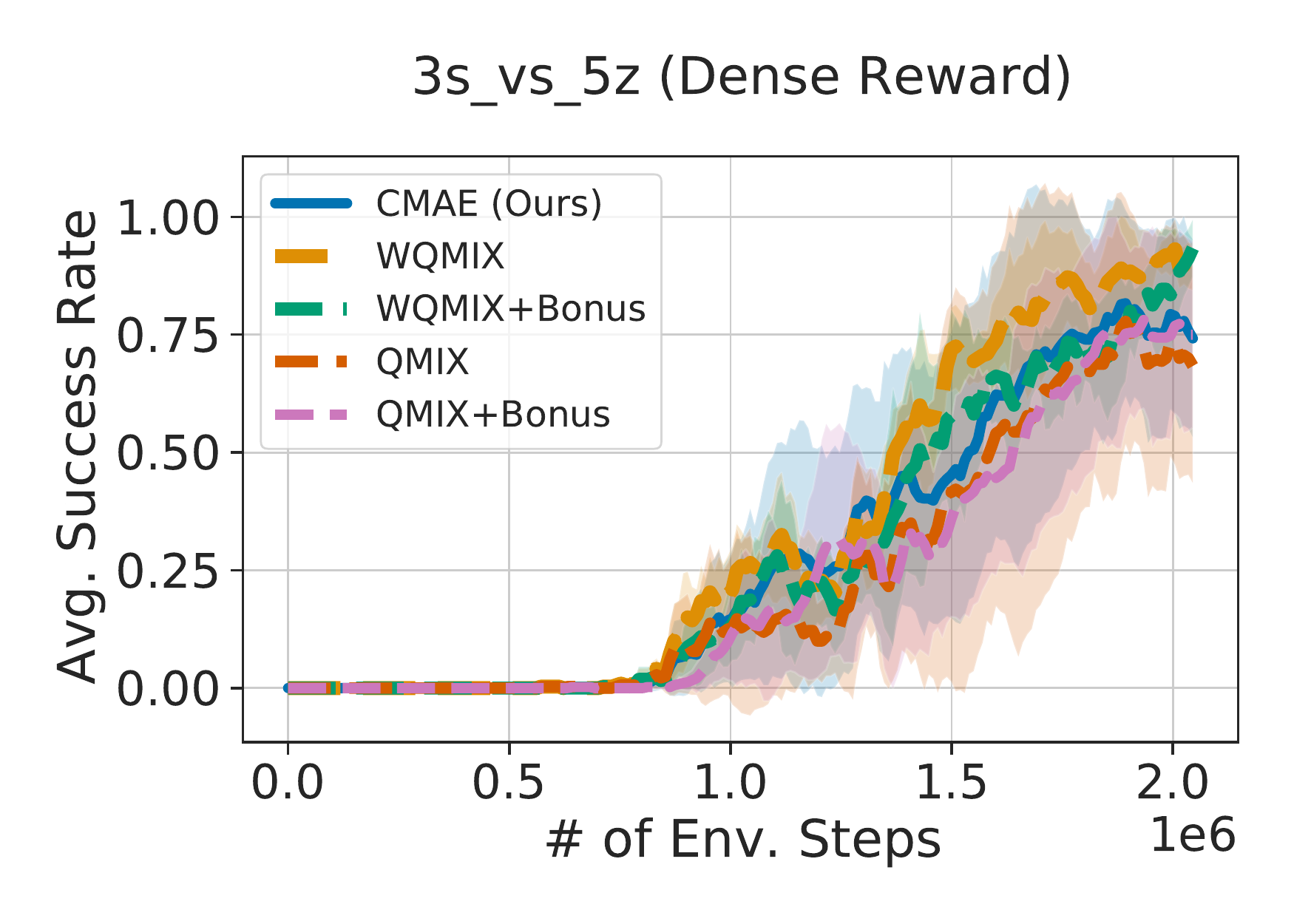}

\end{tabular}%
\caption{Training curves on dense-reward SMAC tasks.} 
\label{fig:sup_smac-dense_plot}
\end{figure*}

\end{document}